\documentclass{article}

\usepackage[preprint]{neurips_2026}

\usepackage[utf8]{inputenc} 
\usepackage[T1]{fontenc}    
\usepackage{hyperref}       
\usepackage{url}            
\usepackage{booktabs}       
\usepackage{amsfonts}       
\usepackage{nicefrac}       
\usepackage{microtype}      
\usepackage{xcolor}         
\usepackage{multirow}
\usepackage{graphicx}
\usepackage{caption}
\usepackage{amsmath, amssymb, amsthm}
\usepackage{mathtools}
\usepackage{bm}
\usepackage{cleveref}

\usepackage{graphicx}
\usepackage{booktabs}
\usepackage{caption}
\usepackage{subcaption}

\usepackage[ruled,vlined]{algorithm2e}
\newcommand{\RR}{\mathbb{R}}
\newcommand{\EE}{\mathbb{E}}
\newcommand{\cL}{\mathcal{L}}
\newcommand{\cN}{\mathcal{N}}
\newcommand{\cS}{\mathcal{S}}
\DeclareMathOperator{\Span}{span}
\newcommand{\rank}{\operatorname{rank}}
\newcommand{\eqdef}{\triangleq}
\newcommand{\old}{\mathrm{old}}
\newcommand{\dir}{\mathrm{dir}}

\theoremstyle{plain}
\newtheorem{theorem}{Theorem}[section]
\newtheorem{lemma}[theorem]{Lemma}

\theoremstyle{definition}
\newtheorem{definition}[theorem]{Definition}
\newtheorem{assumption}[theorem]{Assumption}

\title{PaLoRA: Paced Low-Rank Adaptation for Continual Learning}

\author{%
  Yuxuan Li \\
  School of Computer Science \\
  Peking University \\
  \texttt{lyxzcx@outlook.com}
  \And
  Fanhu Zeng \\
  Institute of Automation \\
  Chinese Academy of Sciences \\
  \texttt{challengezengfh@gmail.com}
  \And
  Hao Tang\thanks{Corresponding author.} \\
  School of Computer Science \\
  Peking University \\
  \texttt{bjdxtanghao@gmail.com}
}

\begin{document}

\maketitle

\begin{abstract}
LoRA-based continual learning methods mitigate catastrophic forgetting through various mechanisms, yet nearly all complement these with \textbf{small learning rates as a heuristic to restrict gradient scaling magnitude}. Such fixed heuristics lack theoretical guidance on how the strength of this restriction should evolve as tasks accumulate. We reveal that even under directional constraints such as nullspace projection, finite-precision updates inevitably leak into the subspace of accumulated prior knowledge along multiple directions. While small learning rates attenuate such leakage, they cannot prevent the accumulated forgetting from intensifying as the effective rank of historical knowledge grows. We show that the optimal magnitude restriction should adaptively increase with this effective rank to balance \textbf{stability and plasticity}, \textit{i.e.}, preservation of previous knowledge and acquisition of new task information. Under an anisotropic leakage model, we derive a pacing law $s^*=\sqrt{R/c}$ that characterizes the optimal \textbf{scaling of gradient steps,} \textit{i.e.}, the magnitude restriction itself, where $R$ is the effective rank of past updates. Based on this insight, we propose PaLoRA, which compresses historical knowledge via adaptive SVD truncation, projects gradients onto the nullspace of prior tasks, and applies rank-aware adaptive pacing. Experiments demonstrate consistent improvements over prior methods, with particularly strong performance in long-horizon settings, achieving substantial gains of 4\% accuracy on challenging 50-task ImageNet-A and ImageNet-R benchmarks. The code is available at: \href{https://github.com/liyuxuan-github/PaLoRA}{https://github.com/liyuxuan-github/PaLoRA}
\end{abstract}

\section{Introduction}
Continual learning~\cite{parisi2019continual,masana2022class,van2022three,zhou2024expandable,guo2025mcitlib}, which refers to the ability to learn sequentially from a stream of tasks without catastrophic forgetting~\cite{robins1995catastrophic,kemker2018measuring,guo2026cl}, remains a fundamental challenge in machine learning. With the rise of large pre-trained vision transformers (ViTs)~\cite{dosovitskiy2020image,khan2022transformers,han2022survey} as standard backbones for visual recognition~\cite{touvron2021training,steiner2021train,DBLP:journals/ijcv/KongXLDTWM25,DBLP:conf/ics/LiLXKS0XDDWLF24,DBLP:conf/nips/DongLWLYTW23}, parameter-efficient fine-tuning methods~\cite{li2021prefix,hu2022lora,gao2024parameter}, particularly Low-Rank Adaptation (LoRA)~\cite{hu2022lora}, have become a prominent paradigm for continual adaptation. By learning compact, task-specific low-rank updates while keeping pre-trained weights frozen, LoRA-based continual learning methods~\cite{liang2024inflora,liu2025lora} achieve an appealing balance between computational efficiency and representational capacity.

Existing LoRA-based continual learning approaches mitigate catastrophic forgetting primarily through directional constraints, \textit{e.g.}, restricting new task updates to the null space of previously learned knowledge ~\cite{wang2023orthogonal,liang2024inflora,zhu2025bilora}. While these methods achieve promising results, they focus on constraining the \emph{direction} of parameter updates while largely overlooking the \emph{magnitude}, \textit{i.e.}, how large the update step is. In practice, nearly all these approaches~\cite{wang2023orthogonal,liang2024inflora,zhu2025bilora,liu2025lora} further adopt small learning rates \cite{zhang2023slca}, which equivalently restrict the magnitude of updates, as an additional safeguard to enhance stability. However, \textbf{(1) why restricting update magnitude mitigates forgetting} remains theoretically unexplored. More importantly, as tasks accumulate, the volume and complexity of historical knowledge increase, rendering a fixed restriction strength increasingly suboptimal; \textbf{(2) therefore, how such restriction should adapt as tasks accumulate} has received no principled investigation.

We illustrate the optimization trajectories in Figure~\ref{fig:insight} to explain why restricting update magnitude mitigates forgetting. Without constraints, fine-tuning tends to move the model toward the task-specific optimum of the new task, which may deviate from the joint optimum over all tasks. Restricting updates instead limits excessive parameter shifts and keeps optimization closer to the joint solution, achieving a better balance between plasticity for learning new knowledge and stability for preserving previous knowledge. As tasks accumulate in Figure~\ref{fig:insight}b, the current parameters encode increasing historical knowledge, reflected by the effective rank of accumulated updates, while the optimum of a single new task becomes less representative of the overall objective. This progressively shifts the trade-off toward stability, suggesting that the restriction strength should be adaptively increased with the effective rank rather than kept fixed.

\begin{figure}[t]
    \centering
    \includegraphics[width=0.95\linewidth]{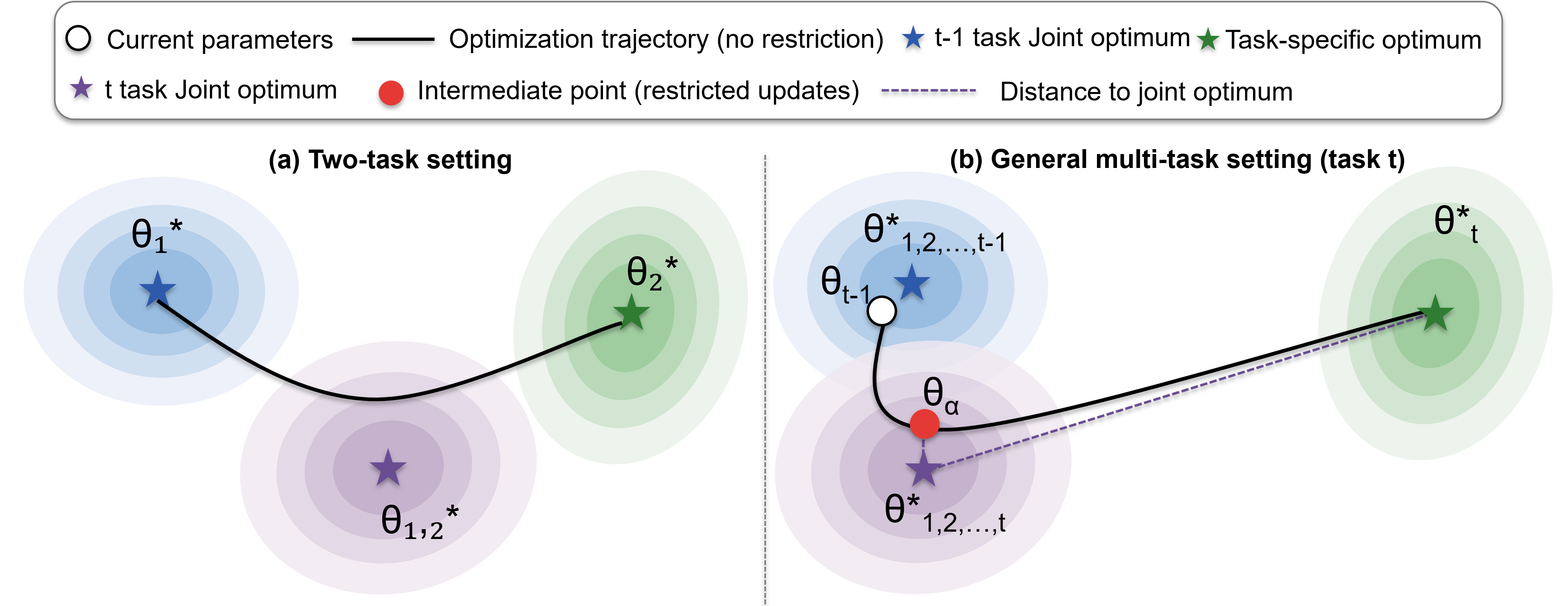}
    \caption{\small Adaptive parameter update restrictions progressively shift optimization trajectory toward previous task solutions as more tasks are encountered, better balancing stability and plasticity to approach the joint optimum.}
    \vspace{-6pt}
    \label{fig:insight}
\end{figure}

To address this, we analyze rank-aware scaling, i.e., magnitude restriction for LoRA updates, in a single-layer linear model under anisotropic leakage and local quadratic old-task sensitivity. We derive a \emph{pacing law} $s^*=\sqrt{R/c}$ for the \textbf{gradient-step denominator}, where $R$ is the effective rank of accumulated updates and $c$ sets the per-step forgetting budget relative to the average per-direction leakage floor. Under this model, $s^*$ maximizes first-order new-task progress subject to the budget: weaker restriction violates the budget, while stronger restriction sacrifices progress beyond what the budget requires. For deep ViTs, this result motivates a scaling principle rather than a formal guarantee.

Building upon this foundation, we propose Paced LoRA (PaLoRA), which compresses accumulated updates via adaptive SVD truncation to obtain effective rank $R$, projects new-task gradients onto the historical nullspace to reduce interference, and divides the gradient of the LoRA factor $B_t$ by $\sqrt{R/C}$ to instantiate rank-aware pacing. An empirically motivated feature-space regularizer complements these constraints by promoting current-task discrimination; the pacing-law optimality claim does not extend to this full regularized framework.

We evaluate PaLoRA on CIFAR-100, ImageNet-R, and ImageNet-A under different task splits. In the five-seed main comparisons, PaLoRA achieves the highest final and average accuracy across all splits. At 50 tasks, its final accuracy exceeds the strongest baseline in these tables, LoRA$^{-}$DRS, by 4.09 percentage points on ImageNet-R and 3.99 points on ImageNet-A. From 10 to 50 tasks, PaLoRA's final accuracy decreases by only 2.19 and 0.65 points on these datasets, respectively.

Our contributions are threefold: (1) under the stated leakage and local sensitivity assumptions, we establish how update restriction should grow with historical effective rank; (2) we derive the $\sqrt{R/c}$ pacing law for the budget-constrained model and instantiate this principle in PaLoRA through adaptive rank compression, nullspace projection, and rank-aware gradient scaling; and (3) we demonstrate strong performance on three benchmarks, including long-horizon (50-task) continual learning, with five-seed main comparisons.

\section{Related Work and Preliminaries}
\subsection{Related Work}
\textbf{Parameter-Efficient Fine-Tuning} (PEFT) aims to adapt pre-trained models to downstream tasks with minimal trainable parameters while keeping the pre-trained weights frozen~\cite{hu2022lora, jia2022visual}. Recent advances in PEFT have led to several distinct categories. For instance, selective methods~\cite{lee2019would,xu2021raise} fine-tune only a subset of parameters; additive methods~\cite{pfeiffer2021adapterfusion,sung2022lst,jie2024convolutional} insert lightweight adapter modules into intermediate layers; prompt-based methods~\cite{li2021prefix,zeng2025local,liu2024gpt} optimize soft prompt vectors; and reparameterization-based methods~\cite{zhang2023adalora,kopiczko2023vera,liu2024dora} sparsify weight matrices to reduce the number of trainable parameters. Among these, Low-Rank Adaptation (LoRA)~\cite{hu2022lora} substantially reduces trainable parameters through low-rank decomposition of weight matrices and incurs no additional inference overhead, making it widely adopted~\cite{zeng2025robustmerge,DBLP:conf/aaai/ShaoLLCYLYMTG26,DBLP:journals/corr/abs-2608-22370} and continuously developed~\cite{hayou2024lora+,kalajdzievski2023rank}.

\textbf{Continual Learning} seeks to address catastrophic forgetting when models learn from sequentially arriving data streams~\cite{guo2025continual}. Early work~\cite{rebuffi2017icarl,castro2018end,aljundi2019gradient,chrysakis2020online} primarily focused on retaining a small subset of historical data for rehearsal to prevent the model from forgetting past knowledge. However, due to storage constraints and privacy concerns, example-free methods have also attracted significant attention. For instance, LwF~\cite{dhar2019learning} constrains the model's outputs to remain close to those of the historical model, while Gradient Projection Memory (GPM)~\cite{saha2021gradient} reduces interference with prior knowledge via gradient orthogonalization.

With the advancement of PEFT, leveraging pre-trained weights for continual learning has gained increasing popularity. For example, L2P~\cite{wang2022learning} and its subsequent~\cite{wang2022dualprompt,smith2023coda,zeng2025modalprompt,DBLP:journals/corr/abs-2503-01887} variants employ multiple trainable prompts to preserve historical knowledge, thereby avoiding explicit storage of past data. Meanwhile, by harnessing knowledge from pre-trained weights, these methods better distinguish between different tasks with promising results. Recently, LoRA-based continual learning methods have been extensively studied~\cite{guo2025hide,guo2025federated,liang2024inflora,DBLP:conf/iclr/WuPHW0PMM025}. For instance, O-LoRA~\cite{wang2023orthogonal} constrains the orthogonality of parameters across different tasks through regularization; BiLoRA~\cite{zhu2025bilora}, based on FourierFT~\cite{gao2024parameter}, assigns nearly disjoint entries to different tasks to minimize mutual interference. Nevertheless, these methods only consider isolation of learning directions across tasks, and in practice, gradient leakage along these directions still causes forgetting to increase as the number of tasks grows.

\subsection{Preliminaries}
\textbf{Problem Definition.} We consider \emph{example-free class-incremental learning} (EFCIL), where a model is trained on a sequence of tasks $\{\mathcal{T}_t\}_{t=1}^T$ without access to past data. Each task $\mathcal{T}_t$ introduces a disjoint set of classes $\mathcal{Y}_t$, and we denote the cumulative label space up to step $t$ as $\mathcal{Y}_{1:t} = \bigcup_{\tau=1}^t \mathcal{Y}_\tau$.

At training time $t$, the learner has access only to samples from the current distribution $\mathcal{D}_t$ over $\mathcal{X} \times \mathcal{Y}_t$, and no data from previous tasks. At inference time, the model must predict over the entire label space $\mathcal{Y}_{1:T}$ \emph{without access to task identity}.

Let $f(x; W) \in \mathbb{R}^{|\mathcal{Y}_{1:T}|}$ denote the model. The objective at step $t$ is to minimize:
\[
\mathcal{L}_t(W) = \mathbb{E}_{(x,y)\sim \mathcal{D}_t}[\ell(f(x; W), y)],
\]
while maintaining performance on previously learned classes.

\textbf{Low-Rank Adaptation.} We consider low-rank adaptation (LoRA)~\cite{hu2022lora} for parameter-efficient fine-tuning of pre-trained models. Given a frozen pre-trained weight matrix $W_0 \in \mathbb{R}^{d \times d}$, LoRA models task-specific adaptation by adding a low-rank update:
\[
W = W_0 + BA,
\]
where $A \in \mathbb{R}^{r \times d}$ and $B \in \mathbb{R}^{d \times r}$ with rank $r \ll d$. This parameterization restricts learning to a low-dimensional subspace while keeping the original model weights fixed.
\section{Methodology}
\subsection{On the Role of Restricted Updates in Continual Learning}
Recently, LoRA-based methods~\cite{liang2024inflora,zhu2025bilora,liu2025lora} often follow the setting of SLCA~\cite{zhang2023slca} that adopt a small learning rate during training. Similarly, related work~\cite{huang2025mind} has observed that reducing the number of training epochs yields comparable improvements in continual learning performance. However, there is limited understanding of why restricted updates can effectively mitigate forgetting.

In this section, we provide an intuitive explanation based on the perspective of model update trajectories. Specifically, we argue that restricting the magnitude of parameter movement helps reduce interference with previously learned representations. Furthermore, this restriction should be adaptively strengthened as the number of tasks increases, since accumulated updates progressively increase the risk of cross-task interference.

We first illustrate our key insight using a two-task setting in Figure~\ref{fig:insight}a. Let $\theta^*_1$ and $\theta^*_2$ denote the optima obtained by training on task 1 and task 2 separately, and $\theta^*_{1,2}$ the joint optimum from simultaneous training. After convergence on task 1, unrestricted training on task 2 leads to $\theta^*_2$, which can be far from $\theta^*_{1,2}$. By restricting parameter updates, the model is constrained to an intermediate point along the trajectory between $\theta^*_1$ and $\theta^*_2$, potentially achieving a better balance between preserving previous knowledge and acquiring new knowledge.

This principle extends to the general $t$-task setting in Figure~\ref{fig:insight}b. After learning $t-1$ tasks, the model converges to $\theta_{t-1}$. Unrestricted training on task $t$ leads to the task-specific optimum $\theta^*_t$, deviating from the joint optimum $\theta^*_{1,2,\ldots,t}$. Restricted updates constrain the trajectory, yielding an intermediate point $\theta_\alpha$. We denote by $\theta_{\alpha^*}$ the point on this constrained trajectory closest to the true joint optimum. As the number of tasks increases, $\theta_{t-1}$ becomes increasingly representative of performance on all previous tasks, while $\theta^*_t$ reflects only the current task. Consequently, the optimal trade-off point $\theta_{\alpha^*}$ shifts closer to $\theta_{t-1}$.

This insight suggests that the strength of update restriction should not remain fixed, as is commonly assumed in prior work using small learning rates. Instead, it should be adaptively increased as the number of tasks grows, to better balance stability on previous tasks and plasticity for new tasks.

\subsection{Theoretical Analysis: Rank-Aware Scaling for LoRA-Based Continual Learning}
\label{sec:rank-aware-theory}

\textbf{Setup.} 
Consider a single linear layer with frozen pre-trained weights $W_0\in\RR^{d\times d}$. 
After learning tasks $1,\dots,t-1$ via LoRA, the accumulated low-rank updates form an effective adaptation $C_{t-1}\in\RR^{d\times d}$ with $\rank(C_{t-1})=R$. 
The current effective weight is $W^{(t-1)}=W_0+C_{t-1}$. 
Task $t$ introduces an additional LoRA increment $\Delta_t$, and the updated weight becomes $W^{(t)}=W^{(t-1)}+\Delta_t$.
Each task $t$ is equipped with an input design matrix $X_t\in\RR^{d\times n_t}$ with positive-definite covariance $\Sigma_t=X_tX_t^\top\succ 0$. 
The task-$t$ objective is the Frobenius-norm regression loss
\[
\cL_t(\Delta_t)=\frac{1}{2}\|(W^{(t-1)}+\Delta_t)X_t-Y_t\|_F^2,
\]
optimized via full-batch gradient descent with fixed-step projected gradient descent.

\textbf{Nullspace Projection and Leakage.} 
To prevent forgetting, the new LoRA increment $\Delta_t$ is ideally restricted to the right nullspace of the accumulated adaptation $C_{t-1}$. 
Let $C_{t-1}=U\Sigma V^\top=\sum_{i=1}^{R}\sigma_i u_i v_i^\top$ be the compact SVD, and let $V_\perp=[v_{R+1},\dots,v_d]$ span the orthogonal complement of the row space of $C_{t-1}$. 
The feasible subspace for $\Delta_t$ is $\cN_t\triangleq\{M V_\perp^\top : M\in\RR^{d\times(d-R)}\}$, which ensures $\Delta_t x_{\old}=0$ for all old-task inputs $x_{\old}\in\Span(V_{[:R]})$ and preserves old-task forward passes under exact arithmetic (see Section~\ref{app:proofs-setup}).

Finite-precision arithmetic and imperfect projection motivate a leakage model in the old-task subspace $\cS=\Span\{u_i v_i^\top\}_{i=1}^R$.
Our derivation relies on two core assumptions: (i) anisotropic leakage with $\EE[\epsilon_i^2]=\eta^2\gamma_i^2/s^2$, where $\eta$ is the base learning rate, $\gamma_i$ the direction-dependent leakage scale, and $s>0$ the step-size denominator; and (ii) locally quadratic old-task loss with curvature $\lambda_i>0$ along each historical singular direction (Assumptions~\ref{ass:leakage}--\ref{ass:quadratic}).

\textbf{The Forgetting–Progress Trade-off.} 
Under these assumptions, the per-step expected old-task loss increase and first-order new-task loss change are:
\begin{align}
\EE[\Delta\cL_{t-1}] &= \frac{\eta^2 R\Gamma^2}{2s^2},
\qquad \Gamma^2\eqdef\frac1R\sum_{i=1}^R\lambda_i\gamma_i^2,
\label{eq:expected-forgetting}\\
\Delta\cL_t &\approx -\frac{\eta G^2}{s},
\qquad\;\; G^2=\|P_{\cN_t}(\nabla_{\Delta_t}\cL_t)\|_F^2.
\label{eq:new-task-progress}
\end{align}
Here, $\Gamma^2$ averages the leakage--curvature products, $P_{\cN_t}(M)=MV_\perp V_\perp^\top$ is the orthogonal projection onto $\cN_t$, and $G^2$ is the squared norm of the projected gradient; new-task progress is $-\Delta\cL_t$.
To see the trade-off, local quadratic sensitivity gives $\EE[\Delta\cL_{t-1}]\approx\frac12\sum_i\lambda_i\EE[\epsilon_i^2]$, yielding Eqn.~\eqref{eq:expected-forgetting}, while a first-order Taylor expansion along the projected step gives Eqn.~\eqref{eq:new-task-progress}. Thus, increasing $s$ suppresses forgetting quadratically but reduces progress only linearly. Full proofs are provided in Section~\ref{app:proofs-lemmas}.

\begin{figure}[t]
    \centering
    \includegraphics[width=1.0\linewidth]{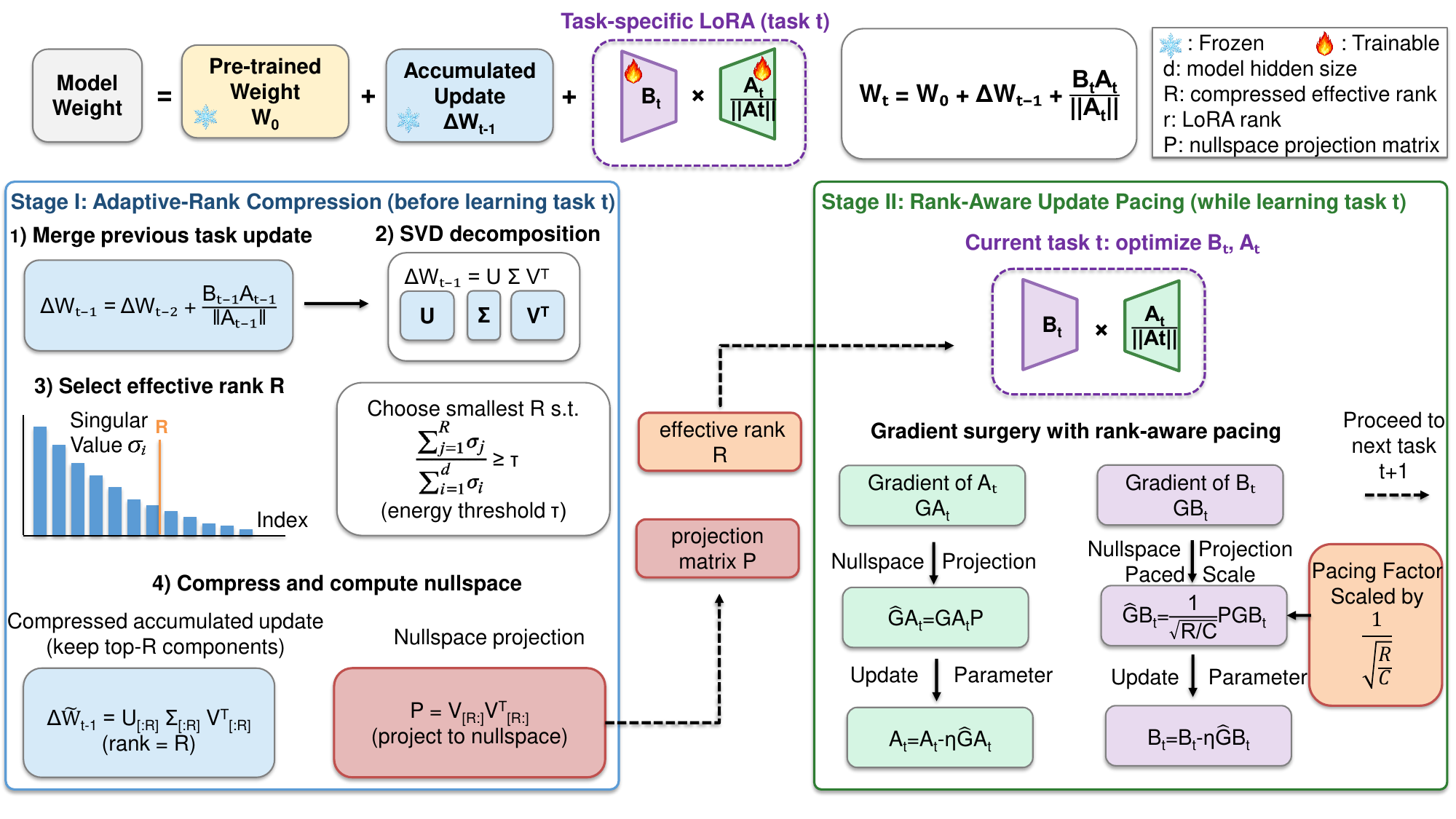}
    \caption{\small \textbf{Overview of PaLoRA.} (A) Model decomposition: pretrained weights and compressed memory from previous tasks are frozen, while a new low-rank update is learned for the current task. (B) Two-stage process: adaptive-rank compression of accumulated knowledge via SVD, followed by rank-aware gradient pacing with nullspace projection and scaled updates.}
    \vspace{-6pt}
    \label{fig:main-structure}
\end{figure}

\textbf{Forgetting Budget.}
We maximize new-task progress subject to a per-step expected forgetting cap $\epsilon_0=c\,\epsilon_{\dir}$, where $\epsilon_{\dir}=\eta^2\Gamma^2/2$ is the average per-direction forgetting from an unscaled step and $c>0$ is a tolerance coefficient. The optimum is the smallest $s$ satisfying this cap.

\begin{lemma}[The $\sqrt{R/c}$ Scaling Law]
\label{lem:sqrt-law}
Under the stated model, with budget $\epsilon_0=c\,\eta^2\Gamma^2/2$, the optimal scale simplifies to:
\begin{equation}
\label{eq:sqrt-law}
\,s^*=\sqrt{R/c}\,.
\end{equation}
\end{lemma}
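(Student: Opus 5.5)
The plan is to treat the choice of $s$ as a one-dimensional constrained optimization built directly from the two per-step expressions in Eqns.~\eqref{eq:expected-forgetting}--\eqref{eq:new-task-progress}. I take these as given, since the paper states them as consequences of Assumptions~\ref{ass:leakage}--\ref{ass:quadratic}. The problem is
\[
\max_{s>0}\ \frac{\eta G^2}{s}\quad\text{subject to}\quad \frac{\eta^2 R\Gamma^2}{2s^2}\le \epsilon_0 = c\,\frac{\eta^2\Gamma^2}{2}.
\]
I assume the nondegenerate case $\eta>0$, $\Gamma^2>0$ and $G^2>0$. If $G^2=0$, the projected gradient vanishes and every $s$ is trivially optimal, so I will set that case aside explicitly.

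\textbf{Step 1: monotonicity of the objective.} The progress $-\Delta\cL_t\approx \eta G^2/s$ is strictly decreasing in $s$. Its derivative is $-\eta G^2/s^2<0$, and $G^2$ does not depend on $s$, because the projection $P_{\cN_t}$ and the gradient are evaluated at the current iterate, before the scaled step is taken. Hence any maximizer is the smallest feasible $s$, which matches the remark preceding the lemma.

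\textbf{Step 2: characterize the feasible set.} The expected forgetting $\eta^2R\Gamma^2/(2s^2)$ is strictly decreasing in $s$. The constraint is therefore equivalent, after cancelling the common positive factor $\eta^2\Gamma^2/2$, to $R/s^2\le c$, that is, $s\ge \sqrt{R/c}$. The feasible set is the closed half-line $[\sqrt{R/c},\infty)$. Its left endpoint is attained and the budget holds with equality there.

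\textbf{Step 3: conclude.} Combining the two steps gives $s^*=\sqrt{R/c}$, with maximal progress $\eta G^2\sqrt{c/R}$. The constraint is active at the optimum, and a KKT check confirms this: the multiplier is positive because the unconstrained problem has no maximizer, as progress grows without bound as $s\to0$.

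I would also note the interpretation: an $s<s^*$ violates the budget, and an $s>s^*$ loses progress by the factor $s^*/s$. This matches the claim in the introduction.

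\textbf{Main obstacle.} The only delicate point is justifying that $\Gamma^2$ and $G^2$ are independent of $s$, so that the cancellation in Step 2 and the monotonicity in Step 1 are legitimate. I would handle this by observing two things. First, $\Gamma^2$ is defined purely from the model constants $\lambda_i$ and $\gamma_i$, with the $1/s^2$ dependence placed in $\EE[\epsilon_i^2]$ by Assumption~\ref{ass:leakage}. Second, the budget is normalized by the unscaled per-direction forgetting $\epsilon_{\dir}$, which is exactly what makes $\eta$ and $\Gamma$ drop out.
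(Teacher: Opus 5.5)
Your proposal is correct and matches the paper's argument: the paper first solves the general-budget problem (progress strictly decreasing in $s$, feasible set $s\ge \eta\Gamma\sqrt{R/(2\epsilon_0)}$, optimum at the left endpoint) and only then substitutes $\epsilon_0=c\,\eta^2\Gamma^2/2$, whereas you substitute the budget first and cancel $\eta^2\Gamma^2/2$ before solving, which is the same computation in a different order. Your explicit exclusion of the degenerate case $G^2=0$ is a small refinement; the paper's uniqueness claim silently omits this case.
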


\textbf{Properties.} 
Substituting $s^*=\sqrt{R/c}$ into Eqn.~\eqref{eq:expected-forgetting} shows that the per-step forgetting exactly equals the prescribed budget $c\,\epsilon_{\dir}$, regardless of the specific directions in the rank-$R$ subspace. Any $s<\sqrt{R/c}$ violates the budget; any $s>\sqrt{R/c}$ is suboptimal by Eqn.~\eqref{eq:new-task-progress}. Hence $s^*$ is the unique Pareto-optimal operating point for this budget-constrained pacing problem under the stated model. Full derivations are in Section~\ref{app:proofs-properties}.

\subsection{PaLoRA Framework}

Motivated by the scaling law $s^*=\sqrt{R/c}$ under the stated model (Lemma~\ref{lem:sqrt-law}), we now present the practical PaLoRA framework, illustrated in Figure~\ref{fig:main-structure}. Our design translates the rank-aware scaling principle into a two-stage algorithm: adaptive-rank compression of accumulated knowledge, followed by paced gradient updates governed by the effective rank $R$. Following LoRA-style parameterization, we learn a sequence of low-rank factors $\{A_i, B_i\}$ with $\rank$ r to encode task-specific knowledge. During training on task $t$, the pretrained weights $W_0$ and the accumulated update $\Delta W_{t-1}$ are frozen, and only $(A_t, B_t)$ are optimized.

A key distinction of our method is that we normalize $A_t$, enforcing it to capture only directional changes, while $B_t$ controls the magnitude of the update. Since $B_t$ is initialized to zero, scaling its gradient provides a direct mechanism to regulate the learning dynamics of the entire task-specific update.

\textbf{Stage I: Adaptive-Rank Compression.} Before learning task $t$, we merge task t-1 updates $\frac{B_{t-1}A_{t-1}}{\|A_{t-1}\|}$ into the accumulated matrix $\Delta W_{t-2}$ and perform SVD decomposition. While assuming a fixed rank growth of $(t-1)r$ is often suboptimal due to redundancy knowledge across tasks and varying task difficulty.

To address this, we define an \emph{effective rank} $R$ by retaining the top singular values whose cumulative energy exceeds a predefined threshold $\tau$. We then reconstruct a compressed representation of $\Delta W_{t-1}$ using these $R$ components and compute its associated nullspace projection matrix $P$. This procedure adaptively removes redundant directions and yields a compact representation of past knowledge.

\textbf{Stage II: Rank-Aware Update Pacing.} During training on task $t$, we use the effective rank $R$ and projection matrix $P$ to constrain parameter updates. The gradient of $A_t$ is projected onto the nullspace of $\Delta W_{t-1}$ to reduce interference with previous tasks. For $B_t$, we additionally apply gradient scaling by $\sqrt{R/C}$, where $C$ is a budget constant. This induces a rank-dependent pacing mechanism that adaptively controls the trade-off between stability and plasticity.

\paragraph{Plasticity Compensation via Regularization.}
While nullspace projection reduces interference with
previously learned knowledge, it may also restrict the
model's ability to learn new tasks. To compensate for
this loss of plasticity, we adopt a regularization term
inspired by the easy-positive triplet loss~\cite{xuan2020improved}.

Specifically, for each anchor, we select the closest
positive and negative samples to construct a
margin-based objective:
\begin{equation}
\mathcal{L}_{\mathrm{reg}}
=
\frac{1}{N}\sum_{i=1}^{N}
\left[
d_i^{+}-d_i^{-}+m
\right]_{+},
\label{eq:reg}
\end{equation}
where $d_i^{+}$ and $d_i^{-}$ denote the minimum
Euclidean distances to samples of the same and
different classes, respectively.
In our implementation, the positive set includes
the anchor itself, making $d_i^{+}$ exactly
zero. Consequently, the regularizer primarily
enforces a margin against the nearest negative
sample, encouraging inter-class separation under
constrained parameter updates.
The training objective $\mathcal{L}_{\text{total}}$ combines the cross-entropy loss $\mathcal{L}_{\text{CE}}$ and the regularization term, balanced by a weighting factor $\lambda$:
\[
\mathcal{L}_{\text{total}} = \mathcal{L}_{\text{CE}} + \lambda \mathcal{L}_{\text{reg}}.
\]

\textbf{Connection to Theory.}
Nullspace projection defines the feasible update subspace, adaptive SVD provides a theory-motivated estimate of the historical effective rank $R$, and pacing instantiates the derived scaling, with $C$ serving as the practical budget coefficient.
The triplet loss is an empirically motivated feature-space plasticity compensator; it is not derived from the pacing law, whose Pareto-optimality claim does not cover the full regularized system.
For deep ViTs, the analysis motivates an approximate local scaling principle for LoRA updates rather than a formal guarantee for the full nonlinear network.

\section{Experiments}
\subsection{Experimental Setup}

\textbf{Datasets and Evaluation Metrics.} We evaluate our method on three benchmark datasets: CIFAR-100~\cite{krizhevsky2009learning}, ImageNet-R~\cite{hendrycks2021many}, and ImageNet-A~\cite{hendrycks2021natural}. Following prior work~\cite{liu2025lora}, we split each dataset into task sequences with 10, 20, 25, and 50 tasks to evaluate performance under different continual learning scenarios.

Following previous studies~\cite{liang2024inflora,zhu2025bilora,liu2025lora}, we adopt $ACC_{T}$ and $\overline{ACC_{T}}$ as evaluation metrics. $ACC_{T}$ denotes the average accuracy over all tasks after learning the final task $T$, while $\overline{ACC_{T}}$ represents the average of $ACC_i$ computed after each task $i$ is learned. These metrics jointly reflect the final performance and the stability of the model throughout the continual learning process. Following prior work~\cite{liu2025lora}, we further report Backward Transfer (BWT) to quantify forgetting. BWT is computed as the average difference between the accuracy of each task after learning the final task $T$ and its accuracy immediately after it is learned, thereby measuring how much previously acquired knowledge is retained during continual learning.

The main accuracy tables report means and standard deviations over five random seeds (0--4). Other tables and the existing accuracy curves report point estimates. Implementation details and additional analyses are provided in Appendix~\ref{app:more-experiments}.

\textbf{Compared Methods.} We compare our method with several LoRA-based continual learning approaches, including InfLoRA~\cite{liang2024inflora}, BiLoRA~\cite{zhu2025bilora}, and LoRA$^{-}$DRS~\cite{liu2025lora}. In addition, we include prompt-based methods such as L2P~\cite{wang2022learning}, DualPrompt~\cite{wang2022dualprompt}, and CODA-Prompt~\cite{smith2023coda}. We additionally compare with SD-LoRA~\cite{DBLP:conf/iclr/WuPHW0PMM025} in Section~\ref{sec:sd-lora}.

We further introduce two additional baselines: \textbf{Pretrain} and \textbf{Sequence}. \textbf{Pretrain} directly uses a frozen pretrained model without any task adaptation, representing full stability but no plasticity. \textbf{Sequence} applies standard LoRA-based fine-tuning across tasks without any mechanism to mitigate forgetting, representing full plasticity but suffering from catastrophic forgetting.

\begin{table}[t]
  \caption{ImageNet-R accuracy (\%, mean $\pm$ standard deviation over five seeds). PaLoRA achieves the highest final and average accuracy across all splits, with a 4.09-point final-accuracy gain over LoRA$^{-}$DRS at 50 tasks.}
  \label{table:imgr}
  \centering
  \setlength{\tabcolsep}{3pt}
  \resizebox{1.0\linewidth}{!}{
  \begin{tabular}{lrrrrrrrr}
    \toprule
    Tasks & \multicolumn{2}{c}{10}& \multicolumn{2}{c}{20}& \multicolumn{2}{c}{25}& \multicolumn{2}{c}{50}\\
    \cmidrule(r){1-1}\cmidrule(r){2-3}\cmidrule(r){4-5}\cmidrule(r){6-7}\cmidrule(r){8-9}
    Method & $ACC_{10}$ & $\overline{ACC_{10}}$ &$ACC_{20}$& $\overline{ACC_{20}}$ &$ACC_{25}$& $\overline{ACC_{25}}$ &$ACC_{50}$& $\overline{ACC_{50}}$ \\
    \cmidrule(r){1-1}\cmidrule(r){2-3}\cmidrule(r){4-5}\cmidrule(r){6-7}\cmidrule(r){8-9}
    Pretrain&55.68$\pm$0.00&59.20$\pm$0.00&55.68$\pm$0.00&59.55$\pm$0.00&55.68$\pm$0.00&59.78$\pm$0.00&55.68$\pm$0.00&59.95$\pm$0.00\\
    Sequence&62.11$\pm$2.21&73.39$\pm$0.97&57.30$\pm$0.39&69.56$\pm$0.69&54.48$\pm$2.16&66.98$\pm$0.28&47.48$\pm$0.05&62.94$\pm$0.60\\
    \cmidrule(r){1-1}\cmidrule(r){2-3}\cmidrule(r){4-5}\cmidrule(r){6-7}\cmidrule(r){8-9}
    L2P~\cite{wang2022learning}&62.28$\pm$0.30&66.98$\pm$0.04&61.24$\pm$0.16&64.73$\pm$0.43&61.03$\pm$0.35&64.37$\pm$0.37&59.44$\pm$0.45&63.16$\pm$0.57\\
    DualPrompt~\cite{wang2022dualprompt}&58.91$\pm$0.23&62.66$\pm$0.06&57.78$\pm$0.33&61.91$\pm$0.14&57.53$\pm$0.64&61.66$\pm$0.10&56.28$\pm$0.97&61.72$\pm$0.22\\
    CODA-Prompt~\cite{smith2023coda}&69.81$\pm$0.06&73.00$\pm$0.15&66.42$\pm$0.13&69.55$\pm$0.35&65.03$\pm$0.23&68.22$\pm$0.18&56.19$\pm$0.16&59.01$\pm$0.27\\
    InfLoRA~\cite{liang2024inflora}&74.25$\pm$0.74&80.35$\pm$0.31&72.62$\pm$0.92&78.87$\pm$0.03&71.42$\pm$1.44&78.11$\pm$0.36&66.52$\pm$1.13&73.09$\pm$0.67\\
    BiLoRA~\cite{zhu2025bilora}&72.73$\pm$0.23&76.38$\pm$0.11&72.38$\pm$0.21&76.24$\pm$0.25&72.51$\pm$0.37&76.12$\pm$0.25&71.56$\pm$0.54&75.28$\pm$0.15\\
    LoRA$^{-}$DRS~\cite{liu2025lora}&75.56$\pm$0.23&81.42$\pm$0.06&75.12$\pm$0.08&80.67$\pm$0.45&74.68$\pm$0.07&79.93$\pm$0.24&72.47$\pm$0.56&77.19$\pm$0.64\\
    \cmidrule(r){1-1}\cmidrule(r){2-3}\cmidrule(r){4-5}\cmidrule(r){6-7}\cmidrule(r){8-9}
    \textbf{PaLoRA (Ours)}&\textbf{78.75$\pm$0.11}&\textbf{83.08$\pm$0.16}&\textbf{78.84$\pm$0.39}&\textbf{83.22$\pm$0.67}&\textbf{78.21$\pm$0.16}&\textbf{82.85$\pm$0.07}&\textbf{76.56$\pm$0.23}&\textbf{80.68$\pm$0.39}\\
    \bottomrule
  \end{tabular}}
\end{table}

\begin{table}[t]
  \caption{CIFAR-100 accuracy (\%, mean $\pm$ standard deviation over five seeds). PaLoRA consistently improves both metrics, reaching $88.96\pm0.14$ final accuracy at 50 tasks.}
  \label{table:cifar}
  \centering
  \setlength{\tabcolsep}{3pt}
  \resizebox{1.0\linewidth}{!}{
  \begin{tabular}{lrrrrrrrr}
    \toprule
    Tasks & \multicolumn{2}{c}{10}& \multicolumn{2}{c}{20}& \multicolumn{2}{c}{25}& \multicolumn{2}{c}{50}\\
    \cmidrule(r){1-1}\cmidrule(r){2-3}\cmidrule(r){4-5}\cmidrule(r){6-7}\cmidrule(r){8-9}
    Method & $ACC_{10}$ & $\overline{ACC_{10}}$ &$ACC_{20}$& $\overline{ACC_{20}}$ &$ACC_{25}$& $\overline{ACC_{25}}$ &$ACC_{50}$& $\overline{ACC_{50}}$ \\
    \cmidrule(r){1-1}\cmidrule(r){2-3}\cmidrule(r){4-5}\cmidrule(r){6-7}\cmidrule(r){8-9}
    Pretrain&83.81$\pm$0.00&88.82$\pm$0.00&83.81$\pm$0.00&89.13$\pm$0.00&83.81$\pm$0.00&89.16$\pm$0.00&83.81$\pm$0.00&89.33$\pm$0.00\\
    Sequence&87.54$\pm$0.94&92.12$\pm$0.78&81.51$\pm$2.60&89.05$\pm$0.48&81.93$\pm$4.13&88.95$\pm$1.11&77.52$\pm$5.94&85.01$\pm$4.23\\
    \cmidrule(r){1-1}\cmidrule(r){2-3}\cmidrule(r){4-5}\cmidrule(r){6-7}\cmidrule(r){8-9}
    L2P~\cite{wang2022learning}&84.53$\pm$0.71&87.84$\pm$0.39&81.00$\pm$0.51&84.70$\pm$0.94&78.45$\pm$0.37&82.08$\pm$0.08&74.48$\pm$1.35&78.90$\pm$1.78\\
    DualPrompt~\cite{wang2022dualprompt}&75.37$\pm$2.35&86.03$\pm$0.66&77.79$\pm$0.04&86.49$\pm$0.52&79.72$\pm$0.85&86.82$\pm$0.12&81.11$\pm$0.77&87.37$\pm$0.05\\
    CODA-Prompt~\cite{smith2023coda}&84.40$\pm$0.13&88.77$\pm$0.38&81.69$\pm$1.03&85.34$\pm$0.70&79.10$\pm$1.12&82.19$\pm$0.87&72.16$\pm$0.08&82.65$\pm$0.49\\
    InfLoRA~\cite{liang2024inflora}&86.51$\pm$1.13&91.51$\pm$0.30&82.40$\pm$5.13&89.17$\pm$1.62&82.32$\pm$2.04&88.62$\pm$1.48&76.04$\pm$1.92&83.31$\pm$2.78\\
    BiLoRA~\cite{zhu2025bilora}&87.01$\pm$0.61&90.69$\pm$0.50&84.91$\pm$0.70&88.96$\pm$0.85&85.32$\pm$1.16&89.66$\pm$1.83&79.99$\pm$3.82&85.51$\pm$3.04\\
    LoRA$^{-}$DRS~\cite{liu2025lora}&89.55$\pm$0.10&92.83$\pm$0.01&88.69$\pm$0.01&92.57$\pm$0.01&88.08$\pm$0.09&92.23$\pm$0.01&87.02$\pm$0.15&91.45$\pm$0.25\\
    \cmidrule(r){1-1}\cmidrule(r){2-3}\cmidrule(r){4-5}\cmidrule(r){6-7}\cmidrule(r){8-9}
    \textbf{PaLoRA (Ours)}&\textbf{91.02$\pm$0.05}&\textbf{94.00$\pm$0.01}&\textbf{90.26$\pm$0.07}&\textbf{93.64$\pm$0.04}&\textbf{90.40$\pm$0.21}&\textbf{93.68$\pm$0.02}&\textbf{88.96$\pm$0.14}&\textbf{92.51$\pm$0.02}\\
    \bottomrule
  \end{tabular}}
\end{table}

\begin{table}[t]
  \caption{ImageNet-A accuracy (\%, mean $\pm$ standard deviation over five seeds). PaLoRA leads across all splits and exceeds LoRA$^{-}$DRS by 3.99 points in final accuracy at 50 tasks.}
  \label{table:imga}
  \centering
  \setlength{\tabcolsep}{3pt}
  \resizebox{1.0\linewidth}{!}{
  \begin{tabular}{lrrrrrrrr}
    \toprule
    Tasks & \multicolumn{2}{c}{10}& \multicolumn{2}{c}{20}& \multicolumn{2}{c}{25}& \multicolumn{2}{c}{50}\\
    \cmidrule(r){1-1}\cmidrule(r){2-3}\cmidrule(r){4-5}\cmidrule(r){6-7}\cmidrule(r){8-9}
    Method & $ACC_{10}$ & $\overline{ACC_{10}}$ &$ACC_{20}$& $\overline{ACC_{20}}$ &$ACC_{25}$& $\overline{ACC_{25}}$ &$ACC_{50}$& $\overline{ACC_{50}}$ \\
    \cmidrule(r){1-1}\cmidrule(r){2-3}\cmidrule(r){4-5}\cmidrule(r){6-7}\cmidrule(r){8-9}
    Pretrain&48.85$\pm$0.00&57.39$\pm$0.00&48.85$\pm$0.00&58.47$\pm$0.00&48.85$\pm$0.00&58.56$\pm$0.00&48.85$\pm$0.00&58.97$\pm$0.00\\
    Sequence&34.13$\pm$1.44&53.86$\pm$2.04&23.87$\pm$0.23&46.07$\pm$1.37&25.97$\pm$3.78&43.62$\pm$0.24&15.61$\pm$8.38&37.62$\pm$1.41\\
    \cmidrule(r){1-1}\cmidrule(r){2-3}\cmidrule(r){4-5}\cmidrule(r){6-7}\cmidrule(r){8-9}
    L2P~\cite{wang2022learning}&36.08$\pm$0.65&41.66$\pm$0.87&33.55$\pm$0.70&38.95$\pm$1.18&33.64$\pm$0.56&39.08$\pm$0.07&32.39$\pm$0.65&37.55$\pm$0.71\\
    DualPrompt~\cite{wang2022dualprompt}&42.42$\pm$3.12&53.20$\pm$0.83&48.09$\pm$0.14&56.99$\pm$0.01&48.02$\pm$0.33&57.19$\pm$0.01&47.66$\pm$0.84&57.69$\pm$0.47\\
    CODA-Prompt~\cite{smith2023coda}&52.17$\pm$0.23&62.64$\pm$0.39&42.59$\pm$0.65&49.77$\pm$1.08&39.30$\pm$0.37&47.66$\pm$0.32&47.53$\pm$0.28&58.05$\pm$0.21\\
    InfLoRA~\cite{liang2024inflora}&55.47$\pm$1.72&68.04$\pm$1.64&51.61$\pm$0.93&65.89$\pm$0.12&50.56$\pm$3.45&65.67$\pm$1.62&45.30$\pm$3.91&60.28$\pm$0.39\\
    BiLoRA~\cite{zhu2025bilora}&56.45$\pm$0.04&67.54$\pm$0.52&56.32$\pm$0.69&67.25$\pm$0.93&57.24$\pm$1.63&69.19$\pm$0.75&56.12$\pm$0.98&69.55$\pm$1.07\\
    LoRA$^{-}$DRS~\cite{liu2025lora}&59.09$\pm$1.35&70.35$\pm$0.93&58.50$\pm$1.07&69.77$\pm$0.27&58.62$\pm$0.33&69.52$\pm$0.01&56.91$\pm$0.23&68.22$\pm$0.28\\
    \cmidrule(r){1-1}\cmidrule(r){2-3}\cmidrule(r){4-5}\cmidrule(r){6-7}\cmidrule(r){8-9}
    \textbf{PaLoRA (Ours)}&\textbf{61.55$\pm$0.56}&\textbf{72.91$\pm$0.15}&\textbf{61.66$\pm$0.23}&\textbf{72.43$\pm$0.37}&\textbf{62.67$\pm$0.18}&\textbf{73.62$\pm$0.57}&\textbf{60.90$\pm$0.28}&\textbf{72.41$\pm$0.16}\\
    \bottomrule
  \end{tabular}}
\end{table}

\begin{table}[t]
  \centering
  \caption{Backward transfer (BWT, \%; higher is better) on all three datasets under 25- and 50-task splits. PaLoRA yields the least forgetting on CIFAR-100 and ImageNet-A, while LoRA$^{-}$DRS has slightly better BWT on ImageNet-R. Values are point estimates.}
  \label{table:bwt}
  \resizebox{0.75\linewidth}{!}{%
  \begin{tabular}{lrrrrrr}
    \toprule
    Method & \multicolumn{2}{c}{CIFAR-100} & \multicolumn{2}{c}{ImageNet-R} & \multicolumn{2}{c}{ImageNet-A} \\
    \cmidrule(lr){2-3}\cmidrule(lr){4-5}\cmidrule(lr){6-7}
    & 25 Tasks & 50 Tasks & 25 Tasks & 50 Tasks & 25 Tasks & 50 Tasks \\
    \midrule
    Sequence & -16.24 & -14.72 & -36.86 & -42.14 & -31.62 & -53.83 \\
    L2P~\cite{wang2022learning} & -8.94 & -14.07 & -9.43 & -10.56 & -6.80 & -11.16 \\
    DualPrompt~\cite{wang2022dualprompt} & -8.35 & -6.46 & -8.37 & -9.72 & -6.36 & -8.79 \\
    CODA-Prompt~\cite{smith2023coda} & -9.83 & -17.11 & -6.02 & -9.76 & -10.71 & -9.46 \\
    InfLoRA~\cite{liang2024inflora} & -12.53 & -20.45 & -15.08 & -20.01 & -16.47 & -18.34 \\
    BiLoRA~\cite{zhu2025bilora} & -6.83 & -17.48 & -5.36 & -6.58 & -6.66 & -8.08 \\
    LoRA$^{-}$DRS~\cite{liu2025lora} & -4.64 & -5.26 & \textbf{-4.78} & \textbf{-4.92} & -6.06 & -9.01 \\
    \textbf{PaLoRA (Ours)} & \textbf{-3.78} & \textbf{-4.72} & -5.34 & -5.20 & \textbf{-4.87} & \textbf{-7.09} \\
    \bottomrule
  \end{tabular}}
\end{table}
\begin{figure}
  
  \centering
  \resizebox{1.0\linewidth}{!}{%
  \begin{tabular}{c}
  \includegraphics[width=1.0\linewidth]{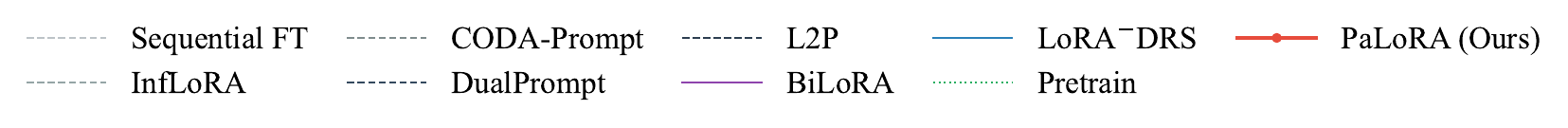}\\
   \includegraphics[width=1.0\linewidth]{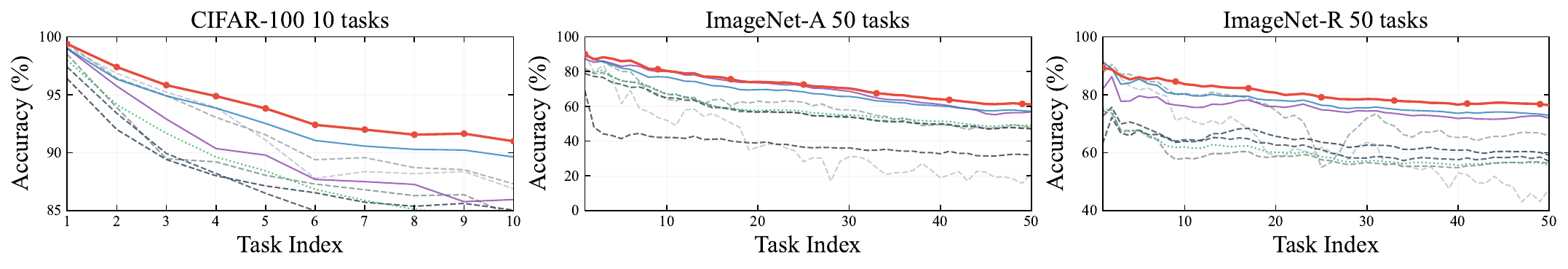}\\
  \end{tabular}}
  \caption{Accuracy trajectories on CIFAR-100 (10 tasks), ImageNet-A (50 tasks), and ImageNet-R (50 tasks), from left to right. PaLoRA maintains strong accuracy, including over long task sequences.}
  \vspace{-6pt}
  \label{fig:3_curve}
\end{figure}

\subsection{Main Results}

\textbf{Overall Performance.}
Tables~\ref{table:imgr}, \ref{table:cifar}, and~\ref{table:imga} report five-seed results on ImageNet-R, CIFAR-100, and ImageNet-A. PaLoRA achieves the highest mean final and average accuracy in all settings. On ImageNet-R, its final-accuracy gain over LoRA$^{-}$DRS ranges from 3.19 to 4.09 percentage points, reaching $76.56\pm0.23\%$ at 50 tasks. On CIFAR-100, the corresponding gains range from 1.47 to 2.32 points. On ImageNet-A, PaLoRA improves final accuracy by 2.46--4.05 points, with a 3.99-point gain at 50 tasks. The five-seed results support the consistency of these improvements, including in long task sequences.

\textbf{Accuracy Evolution.}
Figure~\ref{fig:3_curve} shows accuracy trajectories on CIFAR-100 (10 tasks), ImageNet-A (50 tasks), and ImageNet-R (50 tasks). PaLoRA maintains strong performance through the task sequence, with clear gains in the two long-horizon settings. Full trajectories across datasets and task splits are provided in Appendix~\ref{app:accuracy-curves}.

\textbf{Backward Transfer.}
Table~\ref{table:bwt} consolidates BWT on all three datasets for 25 and 50 tasks. PaLoRA achieves the best BWT on CIFAR-100 and ImageNet-A; at 50 tasks, it improves over LoRA$^{-}$DRS from $-5.26$ to $-4.72$ and from $-9.01$ to $-7.09$, respectively. On ImageNet-R, its BWT is slightly worse ($-5.34/-5.20$ versus $-4.78/-4.92$ for 25/50 tasks), while its final accuracy is higher (Table~\ref{table:imgr}).

\subsection{Comparison with SD-LoRA}
\label{sec:sd-lora}
SD-LoRA~\cite{DBLP:conf/iclr/WuPHW0PMM025} decouples the magnitude and direction of LoRA updates and learns task-specific magnitude scalars. PaLoRA instead determines pacing from the accumulated effective rank, without learning a magnitude scalar for each task. Table~\ref{table:sd-lora} reports their direct comparison. PaLoRA performs better in every reported setting, with larger final-accuracy gaps at 50 tasks than at 10 tasks. At 50 tasks, the gains are 6.72, 21.16, and 29.10 percentage points on ImageNet-R, CIFAR-100, and ImageNet-A, respectively. 

\begin{table}[t]
  \centering
  \caption{Direct comparison with SD-LoRA (accuracy in \%; point estimates). PaLoRA achieves larger final-accuracy gains at 50 tasks than at 10 tasks on all three datasets.}
  \label{table:sd-lora}
  \resizebox{0.8\linewidth}{!}{%
  \begin{tabular}{llrrrrrrrr}
    \toprule
    \multirow{2}{*}{Dataset} & \multirow{2}{*}{Method} & \multicolumn{2}{c}{10 Tasks} & \multicolumn{2}{c}{20 Tasks} & \multicolumn{2}{c}{25 Tasks} & \multicolumn{2}{c}{50 Tasks} \\
    \cmidrule(lr){3-4}\cmidrule(lr){5-6}\cmidrule(lr){7-8}\cmidrule(lr){9-10}
    & & $ACC_{10}$ & $\overline{ACC_{10}}$ & $ACC_{20}$ & $\overline{ACC_{20}}$ & $ACC_{25}$ & $\overline{ACC_{25}}$ & $ACC_{50}$ & $\overline{ACC_{50}}$ \\
    \midrule
    \multirow{2}{*}{ImageNet-R} & SD-LoRA & 78.33 & 82.85 & 74.28 & 79.45 & 75.37 & 80.82 & 69.68 & 76.24 \\
     & \textbf{PaLoRA} & \textbf{78.82} & \textbf{82.96} & \textbf{78.57} & \textbf{82.74} & \textbf{78.10} & \textbf{82.90} & \textbf{76.40} & \textbf{80.40} \\
    \midrule
    \multirow{2}{*}{CIFAR-100} & SD-LoRA & 88.05 & 92.47 & 84.20 & 90.48 & 80.32 & 87.54 & 67.70 & 77.60 \\
     & \textbf{PaLoRA} & \textbf{90.98} & \textbf{93.99} & \textbf{90.31} & \textbf{93.67} & \textbf{90.25} & \textbf{93.69} & \textbf{88.86} & \textbf{92.53} \\
    \midrule
    \multirow{2}{*}{ImageNet-A} & SD-LoRA & 57.21 & 70.68 & 50.63 & 65.06 & 46.94 & 58.19 & 31.99 & 54.78 \\
     & \textbf{PaLoRA} & \textbf{61.95} & \textbf{73.02} & \textbf{61.82} & \textbf{72.70} & \textbf{62.54} & \textbf{73.21} & \textbf{61.09} & \textbf{72.52} \\
    \bottomrule
  \end{tabular}}
\end{table}

\subsection{Ablation Study}
\textbf{Module Ablation.} To better understand the contribution of each component, we conduct ablation experiments on ImageNet-A, where \textbf{Sequence} performs the worst. We decompose our method into three parts: the regularization term, the nullspace gradient projection, and the core pacing mechanism (which includes SVD-based compression of previous knowledge and gradient scaling based on the resulting effective rank $R$).

We report results under three task settings, as shown in Figure~\ref{fig:module_ablation}. Specifically, \textbf{Sequence} includes none of the components, \textbf{proj} uses only nullspace gradient projection, \textbf{no\_reg} combines projection with the pacing mechanism, and PaLoRA includes all three components.

\begin{figure}[b]

  \centering
  \resizebox{1.0\linewidth}{!}{%
  \begin{tabular}{ccc}
  \includegraphics[width=0.30\linewidth]{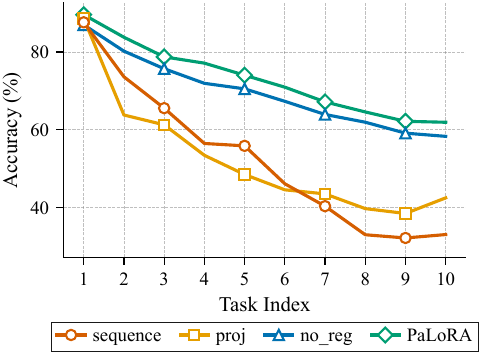}&
  \includegraphics[width=0.30\linewidth]{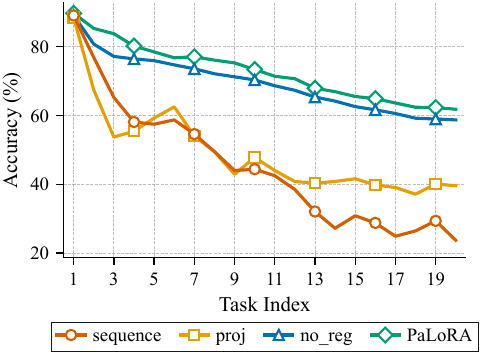}&
  \includegraphics[width=0.30\linewidth]{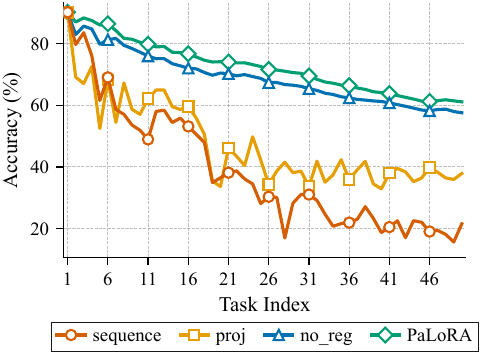}\\
  a) 10 Tasks & b) 20 Tasks& c) 50 Tasks \\
  \end{tabular}}
  \caption{Component ablation on ImageNet-A with 10, 20, and 50 tasks. Projection and pacing improve the trajectories, while triplet regularization provides an additional accuracy gain. Each setting uses a different class split, so early-task curves are not expected to coincide across panels.}
    \label{fig:module_ablation}
\end{figure}

Projection alone mitigates forgetting, while adding rank-aware pacing further improves the accuracy trajectories. The nearly parallel curves with and without regularization suggest an additional plasticity benefit.

Table~\ref{table:triplet-ablation} quantifies the effect of triplet regularization on CIFAR-100. At 50 tasks, PaLoRA without the triplet loss reaches $88.23\%$ final accuracy versus $81.72\%$ for Sequence, and the triplet loss adds a further 0.63 points within PaLoRA. In contrast, adding triplet loss to Sequence reduces final accuracy to $80.00\%$. Together with Figure~\ref{fig:module_ablation}, these results support the pacing framework as the main source of improvement and the triplet loss as an auxiliary plasticity compensator.

\begin{table}[t]
  \centering
  \caption{Triplet-loss ablation on CIFAR-100 (accuracy in \%; point estimates). PaLoRA retains most of its improvement without the triplet loss, which provides a smaller additional gain.}
  \label{table:triplet-ablation}
  \resizebox{0.65\linewidth}{!}{%
  \begin{tabular}{lrrrr}
    \toprule
    Method & $ACC_{10}$ & $\overline{ACC_{10}}$ & $ACC_{50}$ & $\overline{ACC_{50}}$ \\
    \midrule
    Sequence & 86.88 & 91.57 & 81.72 & 87.99 \\
    Sequence + triplet loss & 87.93 & 92.34 & 80.00 & 89.25 \\
    PaLoRA w/o triplet loss & 89.59 & 93.18 & 88.23 & 92.40 \\
    \textbf{PaLoRA} & 90.98 & 93.99 & 88.86 & 92.53 \\
    \bottomrule
  \end{tabular}}
\end{table}

\textbf{Validation of the Pacing Factor Exponent.} We vary the gradient multiplier as $\kappa/R^{\alpha}$, with fixed $\kappa$ and exponent $\alpha$; its corresponding denominator is $s=R^{\alpha}/\kappa$. Thus, Eqn.~\eqref{eq:sqrt-law} predicts $\alpha=0.5$ (with $\kappa=\sqrt{c}$ for the theoretical normalization). We evaluate $\alpha \in \{0.125, 0.25, 0.5, 1.0, 2.0\}$. The results are shown in Figure~\ref{fig:a_alpha}. We report three metrics: \textbf{Accuracy}, which reflects overall performance; \textbf{Training Loss} measured at the end of each task, which indicates model plasticity; and \textbf{Forgetting}, defined as the average performance drop on all learned tasks after each new task, which measures model stability.

The results show that $\alpha = 0.5$ achieves the best performance. When $\alpha > 0.5$, the model suffers from reduced plasticity, leading to performance degradation. In contrast, when $\alpha < 0.5$, the model becomes less stable, resulting in increased forgetting and lower overall performance.

Additional effective-rank comparisons are provided in Appendix~\ref{app:rank-definitions}; sensitivity, computational overhead, LoRA-variant, and backbone results are reported in Appendix~\ref{app:more-experiments}.

\begin{figure}
  
  \centering
  \resizebox{1.0\linewidth}{!}{%
  \begin{tabular}{ccc}
  \includegraphics[width=0.30\linewidth]{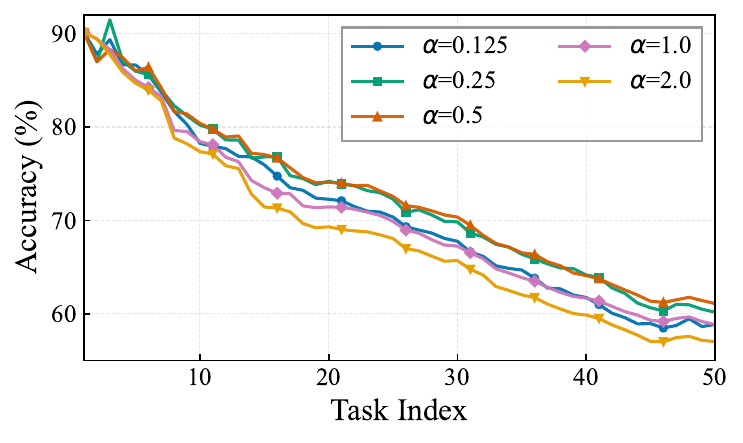}&
  \includegraphics[width=0.30\linewidth]{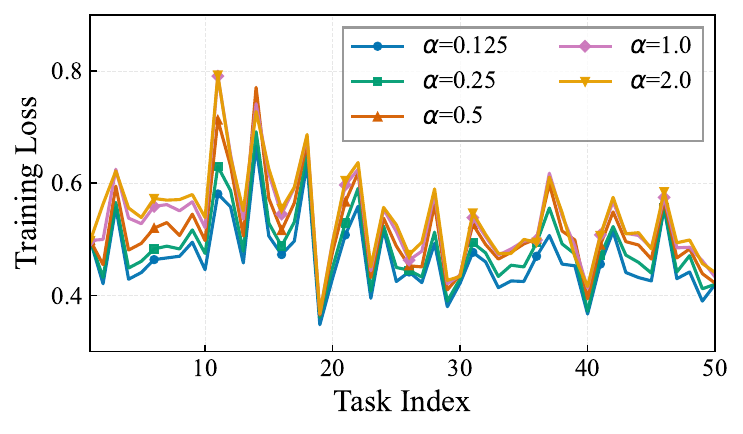}&
  \includegraphics[width=0.30\linewidth]{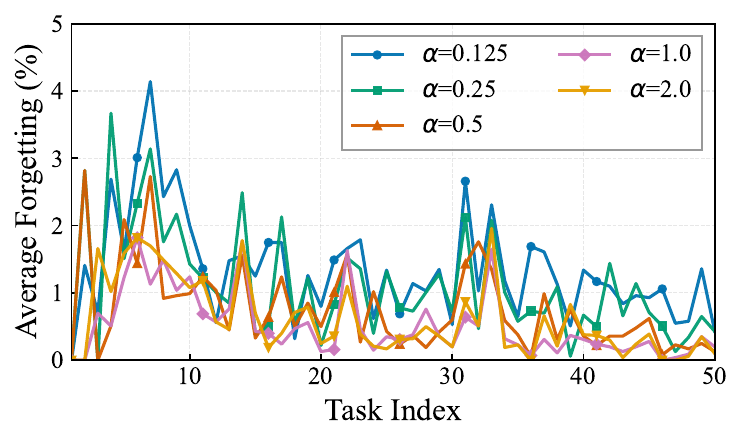}\\
  a) Accuracy & b) Training Loss & c) Forgetting \\
  \end{tabular}}
  \caption{Pacing-exponent ablation on ImageNet-A with 50 tasks. The exponent $\alpha=0.5$ gives the best accuracy while balancing training loss and forgetting, consistent with the predicted square-root denominator scaling.}
  \label{fig:a_alpha}
\end{figure}

\section{Limitations and Future Work}
Our pacing law is derived in a single-layer linear setting under anisotropic leakage and local quadratic sensitivity assumptions. Its application to deep ViTs provides an approximate scaling principle rather than a formal guarantee; the empirical results do not establish that these assumptions hold in the full nonlinear network. The Pareto-optimality result concerns the budget-constrained pacing mechanism, while the auxiliary triplet loss is an empirically motivated plasticity compensator.

Our evaluation is limited to image classification. Extending PaLoRA to generative models, including LLMs and MLLMs, remains future work. The update-pacing principle may also apply beyond LoRA, but suitable measures of accumulated knowledge require further investigation. Although the sensitivity experiments support the chosen pacing constant and SVD threshold, their behavior in broader continual learning settings remains to be studied.
\section{Conclusion}
We show that LoRA-based continual learning suffers from anisotropic leakage into previously acquired subspaces, rendering fixed gradient scaling heuristics inherently suboptimal. Building on this insight, we derive the rank-aware pacing law $s^*=\sqrt{R/c}$ and theoretically establish its Pareto optimality under the stability--plasticity trade-off. Motivated by this principle, we propose PaLoRA, which combines adaptive SVD compression with rank-dependent gradient pacing. Extensive experiments demonstrate that PaLoRA consistently improves performance across benchmarks, achieving up to 4\% gains on the challenging 50-task ImageNet-A and ImageNet-R continual learning settings.
\section*{Acknowledgements}
This work is supported by the Fundamental Research Funds for the Central Universities, Peking University.

\medskip

{
\small
\bibliographystyle{plain}
\bibliography{ref}
}

\clearpage
\appendix
\appendix

\section{Full Theoretical Details of Rank-Aware Scaling}
\label{app:rank-aware-proofs}

\subsection{Setup and Notation}
\label{app:proofs-setup}

Consider a single linear layer with frozen pre-trained weights $W_0\in\RR^{d\times d}$. In a continual learning setting, after learning tasks $1,\dots,t-1$ via LoRA, the accumulated low-rank adaptations form an effective update $C_{t-1}\in\RR^{d\times d}$ with $\rank(C_{t-1})=R$, so that the current effective weight is:
\[
W^{(t-1)}=W_0+C_{t-1}.
\]
Task $t$ introduces an additional LoRA increment $\Delta_t\in\RR^{d\times d}$, and the updated weight becomes:
\[
W^{(t)}=W^{(t-1)}+\Delta_t=W_0+C_{t-1}+\Delta_t.
\]
Each task $t$ is equipped with an input design matrix $X_t\in\RR^{d\times n_t}$ with positive-definite covariance $\Sigma_t=X_tX_t^\top\succ 0$. The task-$t$ objective is the Frobenius-norm regression loss:
\[
\cL_t(\Delta_t)=\frac{1}{2}\|(W^{(t-1)}+\Delta_t)X_t-Y_t\|_F^2,
\]
optimized via full-batch gradient descent with fixed-step projected gradient descent.

After completing task $t-1$, compute the compact SVD of the accumulated LoRA adaptation:
\begin{equation}
\label{eq:svd-cumulative}
C_{t-1}=U\Sigma V^\top=\sum_{i=1}^{R}\sigma_i u_i v_i^\top,\qquad R=\rank(C_{t-1}).
\end{equation}
Let $V_\perp=[v_{R+1},\dots,v_d]$ span the right nullspace (orthogonal complement of the row space) of $C_{t-1}$. We constrain the task-$t$ LoRA increment to the feasible subspace:
\[
\cN_t\triangleq\{M V_\perp^\top : M\in\RR^{d\times(d-R)}\}.
\]
Equivalently, $\Delta_t V_{[:R]}=0$, which implies $\Delta_t x_{\old}=0$ for all old-task inputs $x_{\old}\in\Span(V_{[:R]})$. Under exact arithmetic, old-task forward passes are therefore preserved.

Let $P_{\cN_t}$ denote the Frobenius-orthogonal projection onto $\cN_t$, which acts as $P_{\cN_t}(G)=G V_\perp V_\perp^\top$ for any $G\in\RR^{d\times d}$. The ideal task-$t$ update is:
\[
\delta\Delta_t=-\frac{\eta}{s}P_{\cN_t}(\nabla_{\Delta_t}\cL_t),
\]
where $\eta$ is the base learning rate, $s>0$ is the step-size denominator, and $\nabla_{\Delta_t}\cL_t=((W^{(t-1)}+\Delta_t)X_t-Y_t)X_t^\top$ is evaluated at the current iterate.

\subsection{Leakage and Sensitivity Assumptions}
\label{app:leakage-assumptions}

In practice, finite-precision arithmetic and the imperfect implementation of the projector cause the effective update to leak into the old-task subspace $\cS=\Span\{u_i v_i^\top\}_{i=1}^R$.

\begin{assumption}[Anisotropic Leakage]
\label{ass:leakage}
After a scaled projected gradient step, decompose the residual component of the effective weight update in the SVD basis as:
\[
P_{\cS}(\delta\Delta_t)=\sum_{i=1}^{R}\epsilon_i\,u_i v_i^\top.
\]
The per-direction leakage satisfies:
\begin{equation}
\label{eq:anisotropic-leakage}
\EE[\epsilon_i^2]=\frac{\eta^2\gamma_i^2}{s^2},\qquad\forall i\in\{1,\dots,R\},
\end{equation}
where $\gamma_i>0$ captures the direction-dependent base leakage variance. The leakage is independent across directions.
\end{assumption}

\begin{assumption}[Direction-Dependent Quadratic Sensitivity]
\label{ass:quadratic}
Near the optimum $W^{(t-1)}$, the old-task loss is locally quadratic with curvature $\lambda_i>0$ along the $i$-th singular direction:
\[
\cL_{t-1}(W^{(t-1)}+\delta W)-\cL_{t-1}(W^{(t-1)})\approx\frac{1}{2}\sum_{i=1}^{R}\lambda_i\,\langle u_i v_i^\top,\delta W\rangle_F^2.
\]
\end{assumption}

These assumptions characterize residual leakage along historical directions, rather than the full displacement between task optima. For deep ViTs with LoRA, small projected leakage motivates a local quadratic approximation, but its validity is not established by this analysis.

\subsection{The Forgetting--Progress Trade-off}
\label{app:forgetting-progress-tradeoff}

Under Assumptions~\ref{ass:leakage}--\ref{ass:quadratic}, the per-step expected forgetting and first-order new-task progress are given by Eqns.~\eqref{eq:expected-forgetting} and~\eqref{eq:new-task-progress}.

Eqn.~\eqref{eq:expected-forgetting} shows that scaling down the step by $s$ reduces forgetting \emph{quadratically}: halving the step cuts forgetting by a factor of four. Meanwhile, Eqn.~\eqref{eq:new-task-progress} shows that new-task progress degrades only \emph{linearly}: halving the step merely halves the progress. This asymmetric sensitivity is the fundamental mechanism behind the $\sqrt{R/c}$ scaling law: we can pay a small linear price in learning speed for a large quadratic reduction in catastrophic forgetting.

Proofs of Lemmas~\ref{lem:forgetting} and~\ref{lem:progress} are given below in Section~\ref{app:proofs-lemmas}.

\subsection{Proofs of Main Lemmas}
\label{app:proofs-lemmas}

\begin{lemma}[Expected Forgetting per Step]
\label{lem:forgetting}
Under Assumptions~\ref{ass:leakage}--\ref{ass:quadratic}, the expected old-task loss increase from one scaled step is:
\begin{equation}
\label{eq:expected-forgetting-app}
\EE[\Delta\cL_{t-1}]=\frac{\eta^2 R\Gamma^2}{2s^2},
\qquad\text{where}\quad
\Gamma^2\triangleq\frac{1}{R}\sum_{i=1}^{R}\lambda_i\gamma_i^2.
\end{equation}
\end{lemma}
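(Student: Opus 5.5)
The plan is to substitute the leakage decomposition of Assumption~\ref{ass:leakage} into the quadratic form of Assumption~\ref{ass:quadratic}. Then we take expectations term by term and regroup the sum using the definition of $\Gamma^2$.

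\textbf{Step 1 (reduce to the old-task subspace).} For the effective update $\delta W=\delta\Delta_t$ we need the coefficients $\langle u_iv_i^\top,\delta W\rangle_F$. Since $C_{t-1}$ has orthonormal singular vectors, the rank-one matrices $\{u_iv_i^\top\}_{i=1}^R$ are Frobenius-orthonormal, because $\langle u_iv_i^\top,u_jv_j^\top\rangle_F=(u_i^\top u_j)(v_i^\top v_j)=\delta_{ij}$. Hence $\langle u_iv_i^\top,\delta W\rangle_F=\langle u_iv_i^\top,P_{\cS}(\delta W)\rangle_F$. The component of $\delta W$ orthogonal to $\cS$, including the intended nullspace part in $\cN_t$, contributes nothing to the quadratic form.

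\textbf{Step 2 (read off coefficients).} Inserting $P_{\cS}(\delta\Delta_t)=\sum_j\epsilon_j u_jv_j^\top$ and using orthonormality gives $\langle u_iv_i^\top,\delta\Delta_t\rangle_F=\epsilon_i$. Assumption~\ref{ass:quadratic} then yields
\[
\Delta\cL_{t-1}\approx\tfrac12\sum_{i=1}^R\lambda_i\epsilon_i^2 .
\]
This step needs no cross terms: the quadratic form in Assumption~\ref{ass:quadratic} is already diagonal in the singular basis. Independence across directions is therefore not even required for the expectation.

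\textbf{Step 3 (take expectations).} By linearity and \eqref{eq:anisotropic-leakage},
\[
\EE[\Delta\cL_{t-1}]=\tfrac12\sum_i\lambda_i\eta^2\gamma_i^2/s^2=\frac{\eta^2}{2s^2}\sum_i\lambda_i\gamma_i^2 .
\]
Writing $\sum_i\lambda_i\gamma_i^2=R\Gamma^2$ gives \eqref{eq:expected-forgetting-app}.

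\textbf{Main obstacle.} The only delicate point is Step 1. We must justify that the out-of-$\cS$ component of the update does not enter the old-task loss. The reason is that Assumption~\ref{ass:quadratic} is stated only through the projections $\langle u_iv_i^\top,\cdot\rangle_F$, so directions outside $\cS$ are invisible to it. The paper, however, describes $\cS$ via $\Span\{u_iv_i^\top\}$ rather than all of $\Span(V_{[:R]})$. So I would remark that the result is exact within the stated model, and that the ``$\approx$'' of the local quadratic approximation is inherited as an equality under the model's convention.
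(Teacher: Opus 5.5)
Your proposal is correct and follows the paper's proof essentially verbatim. Frobenius orthonormality of $\{u_iv_i^\top\}_{i=1}^R$ collapses the quadratic form of Assumption~\ref{ass:quadratic} to $\tfrac12\sum_i\lambda_i\epsilon_i^2$, and linearity of expectation with \eqref{eq:anisotropic-leakage} then gives $\eta^2R\Gamma^2/(2s^2)$; your extra observations (that components outside $\cS$, including those in $\cN_t$, are invisible to the quadratic form, and that independence across directions is not needed) are accurate refinements the paper leaves implicit.
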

\begin{proof}
Assumption~\ref{ass:quadratic} together with the orthonormality $\langle u_i v_i^\top,u_j v_j^\top\rangle_F=\delta_{ij}$ yields
\[
\Delta\cL_{t-1}\approx\frac{1}{2}\sum_{i=1}^{R}\lambda_i\epsilon_i^2.
\]
Taking expectation and applying Assumption~\ref{ass:leakage}, we obtain:
\[
\EE[\Delta\cL_{t-1}]=\frac{1}{2}\sum_{i=1}^{R}\lambda_i\EE[\epsilon_i^2]
=\frac{1}{2}\sum_{i=1}^{R}\lambda_i\frac{\eta^2\gamma_i^2}{s^2}
=\frac{\eta^2}{2s^2}\sum_{i=1}^{R}\lambda_i\gamma_i^2
=\frac{\eta^2 R\Gamma^2}{2s^2}.
\qedhere
\]
\end{proof}

\begin{lemma}[New-Task Progress per Step]
\label{lem:progress}
For the projected gradient $g=P_{\cN_t}(\nabla_{\Delta_t}\cL_t)$ with squared norm $G^2=\|g\|_F^2$, a step $\delta\Delta_t=-\frac{\eta}{s}g$ yields first-order progress:
\begin{equation}
\label{eq:new-task-progress-app}
\Delta\cL_t\approx-\frac{\eta G^2}{s}.
\end{equation}
Hence the magnitude of new-task progress is $\eta G^2/s$.
\end{lemma}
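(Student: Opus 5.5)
The plan is a first-order Taylor expansion of $\cL_t$ at the current iterate $\Delta_t$ along the projected step $\delta\Delta_t=-\frac{\eta}{s}g$, where $g=P_{\cN_t}(\nabla_{\Delta_t}\cL_t)$, followed by the observation that the projection $P_{\cN_t}$ is Frobenius-orthogonal.

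\textbf{Step 1 (expansion).} Since $\cL_t$ is a quadratic function of $\Delta_t$, the expansion is exact up to the second-order term:
\[
\cL_t(\Delta_t+\delta\Delta_t)-\cL_t(\Delta_t)=\langle \nabla_{\Delta_t}\cL_t,\delta\Delta_t\rangle_F+\tfrac12\|\delta\Delta_t X_t\|_F^2 .
\]
The remainder is $\tfrac12\|\delta\Delta_t X_t\|_F^2$, which is at most $\frac{\eta^2}{2s^2}\|\Sigma_t\|_2\,G^2=O(\eta^2/s^2)$. We drop this term at first order, i.e.\ for small $\eta/s$; this is the meaning of ``$\approx$'' in Eqn.~\eqref{eq:new-task-progress-app}.

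\textbf{Step 2 (linear term).} We evaluate $\langle \nabla_{\Delta_t}\cL_t, g\rangle_F$. Write $Q=V_\perp V_\perp^\top$. It is symmetric and idempotent, so $P_{\cN_t}(M)=MQ$ is self-adjoint and idempotent under the Frobenius inner product. Then
\[
\langle \nabla\cL_t, \nabla\cL_t Q\rangle_F=\operatorname{tr}\bigl(Q^\top\nabla\cL_t^\top\nabla\cL_t\bigr)=\operatorname{tr}\bigl(Q^\top Q^\top\nabla\cL_t^\top\nabla\cL_t Q\bigr)=\|\nabla\cL_t Q\|_F^2=G^2 .
\]
Substituting into the linear term gives $\langle\nabla\cL_t,\delta\Delta_t\rangle_F=-\frac{\eta}{s}G^2$. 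This yields Eqn.~\eqref{eq:new-task-progress-app}, so the magnitude of progress is $\eta G^2/s$.

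\textbf{Main point of care.} The only step needing attention is Step 2: we must verify that $P_{\cN_t}$ is an orthogonal projection, not merely some projection. This follows from the orthonormality of the columns of $V_\perp$. The rest of the argument is routine; it remains only to state explicitly that the neglected term is quadratic in $\eta/s$ and nonnegative.
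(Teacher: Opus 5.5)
Your proposal is correct and follows the paper's proof: a first-order Taylor expansion in which the linear term reduces to $-\frac{\eta}{s}\|g\|_F^2$. You also make explicit two points the paper leaves implicit. First, the neglected remainder for this quadratic loss is exactly $\frac12\|\delta\Delta_t X_t\|_F^2$, which is nonnegative and $O(\eta^2/s^2)$. Second, $\langle\nabla_{\Delta_t}\cL_t, g\rangle_F=\|g\|_F^2$ holds because $P_{\cN_t}$ is a Frobenius-orthogonal projection; the paper silently replaces $\nabla_{\Delta_t}\cL_t$ by $g$ in the inner product.
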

\begin{proof}
By first-order Taylor expansion of $\cL_t$ around the current iterate:
\[
\Delta\cL_t\approx\langle\nabla_{\Delta_t}\cL_t,\delta\Delta_t\rangle_F
=\Big\langle g,-\frac{\eta}{s}g\Big\rangle_F
=-\frac{\eta}{s}\|g\|_F^2
=-\frac{\eta G^2}{s}.
\qedhere
\]
\end{proof}

\begin{lemma}[Optimal Scaling Factor for a Given Budget]
\label{lem:optimal}
Consider the constrained optimization problem that maximizes new-task progress subject to a per-step forgetting budget $\epsilon_0>0$:
\[
\max_{s>0}\;\frac{\eta G^2}{s}\quad\text{subject to}\quad\frac{\eta^2 R\Gamma^2}{2s^2}\le\epsilon_0.
\]
The unique optimal scale is:
\begin{equation}
\label{eq:optimal-scale-budget}
s^*=\eta\Gamma\sqrt{\frac{R}{2\epsilon_0}}.
\end{equation}
\end{lemma}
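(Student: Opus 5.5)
The plan is to treat this as a one-dimensional monotone optimization in $s$, using the closed forms from Lemma~\ref{lem:forgetting} and Lemma~\ref{lem:progress}. First I will record the standing nondegeneracy conditions. We have $\eta>0$, and $\Gamma^2>0$ because every $\lambda_i,\gamma_i>0$. I also assume $G^2>0$: if the projected gradient vanishes, the objective is identically zero, every feasible $s$ is optimal, and uniqueness fails. I will state this caveat explicitly.

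The first step is to rewrite the constraint. Since $s>0$, multiplying $\frac{\eta^2 R\Gamma^2}{2s^2}\le\epsilon_0$ by $s^2/\epsilon_0>0$ gives $s^2\ge \frac{\eta^2R\Gamma^2}{2\epsilon_0}$. Taking positive square roots, this is equivalent to
\[
s\ge s^*\eqdef\eta\Gamma\sqrt{\frac{R}{2\epsilon_0}},
\]
where $\Gamma=\sqrt{\Gamma^2}>0$. So the feasible set is exactly the closed half-line $[s^*,\infty)$, and it is nonempty.

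The second step is to observe that the objective $s\mapsto \eta G^2/s$ is strictly decreasing on $(0,\infty)$ when $\eta G^2>0$. A strictly decreasing function on $[s^*,\infty)$ attains its maximum at the left endpoint and only there. For any feasible $s>s^*$ we have $\eta G^2/s<\eta G^2/s^*$, which gives both optimality and uniqueness. I will also note that the constraint is active at $s^*$, meaning the forgetting equals $\epsilon_0$ exactly; this matches the later ``Properties'' discussion.

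The only step that needs care is the treatment of degenerate cases, specifically $G=0$ and the sign conventions. The rest is algebra. As a consistency check, substituting $\epsilon_0=c\,\eta^2\Gamma^2/2$ should give $s^*=\eta\Gamma\sqrt{R/(c\eta^2\Gamma^2)}=\sqrt{R/c}$, which recovers Lemma~\ref{lem:sqrt-law}.
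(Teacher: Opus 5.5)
Your proposal is correct and follows the paper's own proof: the constraint reduces to the half-line $s\ge \eta\Gamma\sqrt{R/(2\epsilon_0)}$, and the strictly decreasing objective $\eta G^2/s$ is maximized only at its left endpoint. Your explicit caveat that $G^2>0$ is needed for strict monotonicity, and hence for uniqueness, is a valid refinement that the paper leaves implicit.
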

\begin{proof}
The objective $\eta G^2/s$ is strictly decreasing in $s$. The constraint yields the feasible set $s\ge\eta\Gamma\sqrt{\frac{R}{2\epsilon_0}}$. Because the objective decreases with $s$, the optimum is attained at the left boundary, giving $s^*=\eta\Gamma\sqrt{\frac{R}{2\epsilon_0}}$.
\end{proof}

\begin{definition}[Average Per-Direction Noise Floor]
\label{def:floor}
The average per-direction noise floor $\epsilon_{\dir}$ is the mean expected old-task forgetting from one unscaled step ($s=1$) leaking into a single singular direction:
\begin{equation}
\label{eq:noise-floor-app}
\epsilon_{\dir}\triangleq\frac{1}{R}\sum_{i=1}^{R}\frac{\lambda_i\eta^2\gamma_i^2}{2}
=\frac{\eta^2\Gamma^2}{2}.
\end{equation}
\end{definition}

We set the budget to $c$ times this average floor:
\begin{equation}
\label{eq:budget-c-app}
\epsilon_0=c\,\epsilon_{\dir}=\frac{c\,\eta^2\Gamma^2}{2}.
\end{equation}

\begin{lemma}[The $\sqrt{R/c}$ Scaling Law]
\label{lem:sqrt-law-app}
With the natural budget $\epsilon_0=\frac{c\,\eta^2\Gamma^2}{2}$, the optimal scale simplifies to:
\begin{equation}
\label{eq:sqrt-law-app}
s^*=\sqrt{\frac{R}{c}}.
\end{equation}
\end{lemma}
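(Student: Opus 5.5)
The plan is a direct substitution into Lemma~\ref{lem:optimal}, which already gives the unique optimal scale for an arbitrary budget, $s^*=\eta\Gamma\sqrt{R/(2\epsilon_0)}$. First we check that the budget in \eqref{eq:budget-c-app} is admissible for that lemma. Since $c>0$, $\eta>0$, and $\Gamma^2>0$, we have $\epsilon_0=c\,\eta^2\Gamma^2/2>0$. Here $\Gamma^2>0$ holds because every $\lambda_i>0$ by Assumption~\ref{ass:quadratic} and every $\gamma_i>0$ by Assumption~\ref{ass:leakage}. So the hypothesis $\epsilon_0>0$ holds, and the feasible set $\{s>0 : \eta^2R\Gamma^2/(2s^2)\le\epsilon_0\}$ is a nonempty half-line.

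Next we substitute. We have $2\epsilon_0 = c\,\eta^2\Gamma^2$, so
\[
s^*=\eta\Gamma\sqrt{\frac{R}{c\,\eta^2\Gamma^2}}=\frac{\eta\Gamma}{\eta\Gamma}\sqrt{\frac{R}{c}}=\sqrt{\frac{R}{c}},
\]
using $\eta\Gamma>0$ to pull $\sqrt{\eta^2\Gamma^2}=\eta\Gamma$ out of the root without sign issues. Uniqueness and optimality carry over verbatim from Lemma~\ref{lem:optimal}. The objective $\eta G^2/s$ is strictly decreasing in $s$, assuming $G^2>0$; if $G^2=0$ every feasible $s$ is trivially optimal, and we note that degenerate case separately. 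The optimum therefore sits at the left endpoint of the feasible half-line.

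As a consistency check, we plug $s^*$ back into Lemma~\ref{lem:forgetting}:
\[
\frac{\eta^2R\Gamma^2}{2(R/c)}=\frac{c\,\eta^2\Gamma^2}{2}=\epsilon_0 .
\]
The budget is met with equality, which matches the Properties paragraph. There is no real obstacle here. The only delicate points are the positivity of $\eta\Gamma$ and $\epsilon_0$ needed for the cancellation, and the observation that the dependence on $\eta$, $\Gamma$, $\lambda_i$, and $\gamma_i$ drops out entirely because the budget is expressed in units of $\epsilon_{\dir}$.
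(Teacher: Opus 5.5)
Your proposal is correct and matches the paper's proof: substitute $\epsilon_0=c\,\eta^2\Gamma^2/2$ into the budget-optimal scale $s^*=\eta\Gamma\sqrt{R/(2\epsilon_0)}$ from Lemma~\ref{lem:optimal}, then cancel $\eta\Gamma$. Your extra checks are sound refinements of points the paper leaves implicit or defers to its Properties paragraph: positivity of $\eta\Gamma$ and $\epsilon_0$, the degenerate case $G^2=0$, and the verification that the budget holds with equality.
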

\begin{proof}
Substitute $\epsilon_0=\frac{c\,\eta^2\Gamma^2}{2}$ from Eqn.~\eqref{eq:budget-c-app} into Eqn.~\eqref{eq:optimal-scale-budget}:
\[
s^*=\eta\Gamma\sqrt{\frac{R}{2\cdot\frac{c\,\eta^2\Gamma^2}{2}}}
=\eta\Gamma\sqrt{\frac{R}{c\,\eta^2\Gamma^2}}
=\sqrt{\frac{\eta^2\Gamma^2\cdot R}{c\,\eta^2\Gamma^2}}
=\sqrt{\frac{R}{c}}.
\qedhere
\]
\end{proof}

\subsection{Properties of the Scaling Law}
\label{app:proofs-properties}

Substituting $s^*=\sqrt{R/c}$ into Lemma~\ref{lem:forgetting} gives:
\[
\EE[\Delta\cL_{t-1}]
=\frac{\eta^2 R\Gamma^2}{2(\sqrt{R/c})^2}
=\frac{\eta^2 R\Gamma^2}{2\cdot R/c}
=\frac{c\,\eta^2\Gamma^2}{2}
=\epsilon_0.
\]
Thus the per-step forgetting exactly equals the prescribed budget $c\,\epsilon_{\dir}$, regardless of the specific directions that constitute the current rank-$R$ subspace.

Any $s<\sqrt{R/c}$ violates the budget because $\EE[\Delta\cL_{t-1}]>\epsilon_0$. Any $s>\sqrt{R/c}$ is suboptimal because, by Lemma~\ref{lem:progress}, new-task progress $\eta G^2/s$ is strictly smaller than achievable. Hence $s^*$ is the unique Pareto-optimal operating point for this budget-constrained pacing problem under the stated model, rather than for the full PaLoRA system with auxiliary regularization.
\section{Additional Experimental Results}
\label{app:more-experiments}
\subsection{Implementation Details}
\label{app:implementation}
Our method is implemented in PyTorch. Following prior work~\cite{liang2024inflora,liu2025lora}, we insert LoRA modules into the key and value projections of the attention layers in ViT. Across all datasets and task settings, we fix the SVD compression threshold $\tau$ to 0.99 and fix $C$ to 4 in the pacing factor. The regularization term is weighted by 0.5 with a margin of 2.0.

The main accuracy tables report means and standard deviations over seeds 0--4; the additional analyses report point estimates unless otherwise stated. Experiments are conducted on a single NVIDIA GeForce RTX 4080 SUPER GPU. Unless otherwise specified, we use a ViT-B/16-IN21K pretrained backbone. Training is performed using the Adam optimizer with a learning rate of 0.0005 and $(\beta_1, \beta_2) = (0.9, 0.999)$. We train for 50 epochs on ImageNet-R and ImageNet-A, and 20 epochs on CIFAR-100, with a batch size of 128.

\subsection{Additional Accuracy Curves}
\label{app:accuracy-curves}
The main text shows representative trajectories from all three datasets. Figure~\ref{fig:all_curve} provides the full results across datasets and task splits, showing that PaLoRA maintains strong performance throughout the learning sequences.

\begin{figure}
  \centering
  \resizebox{1.0\linewidth}{!}{%
  \begin{tabular}{c}
  \includegraphics[width=0.30\linewidth]{sec/figures/figure_legend.pdf}\\
   \includegraphics[width=0.30\linewidth]{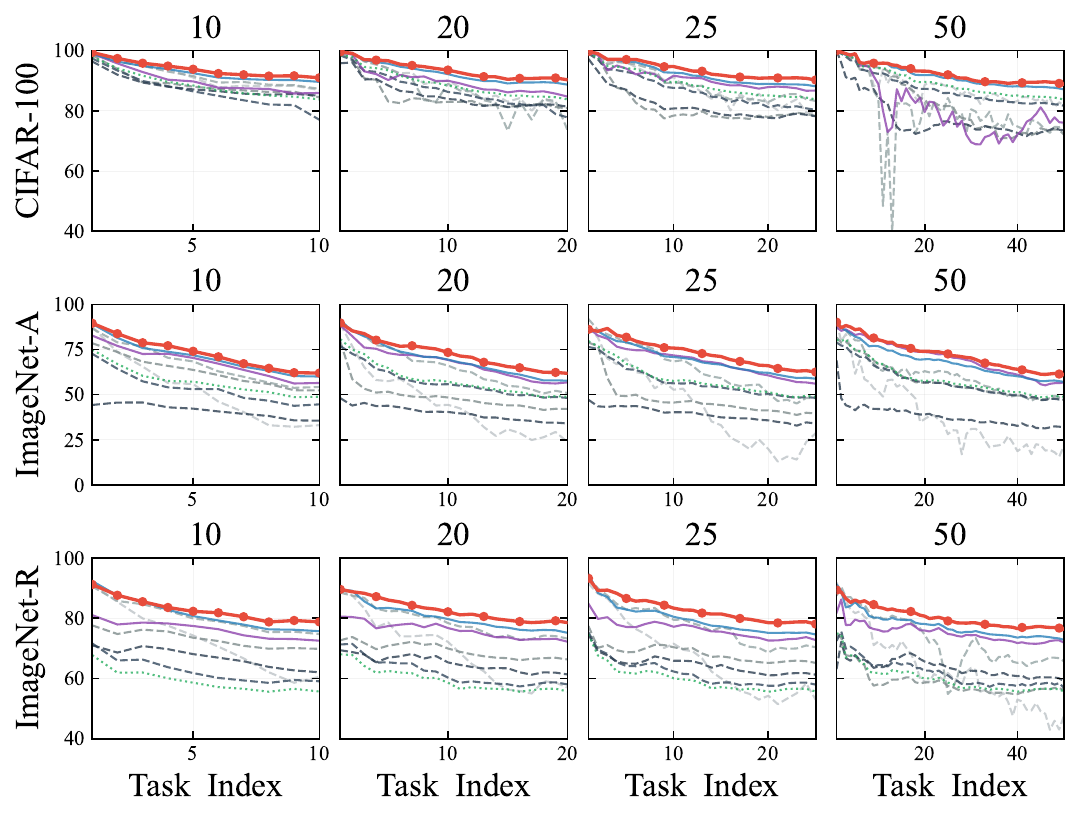}\\
  \end{tabular}}
  \vspace{-14pt}
  \caption{Accuracy trajectories across CIFAR-100, ImageNet-A, and ImageNet-R under 10-, 20-, 25-, and 50-task splits. PaLoRA maintains strong performance across datasets and task horizons.}
  \vspace{-3pt}
  \label{fig:all_curve}
\end{figure}

\subsection{Effectiveness of SVD Compression} We evaluate the effectiveness of the SVD compression module across all three datasets under different task settings. The results are illustrated in Figure~\ref{fig:rank}.

The \textbf{Constant Rank} curve denotes a predefined rank budget, where $R = r \times$ (number of tasks). The other three curves represent the actual effective rank of the accumulated knowledge after each task on the three datasets. From the results, it is clear that under a fixed threshold $\tau = 0.99$, our method achieves adaptive rank allocation according to task difficulty. Instead of growing linearly with the number of tasks, the effective rank adjusts dynamically, demonstrating the efficiency of the proposed compression mechanism.
\begin{figure}
  \centering
  \resizebox{1.0\linewidth}{!}{%
  \begin{tabular}{ccc}
  \includegraphics[width=0.30\linewidth]{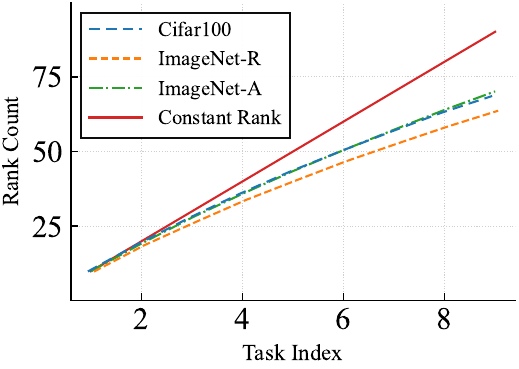}&
  \includegraphics[width=0.30\linewidth]{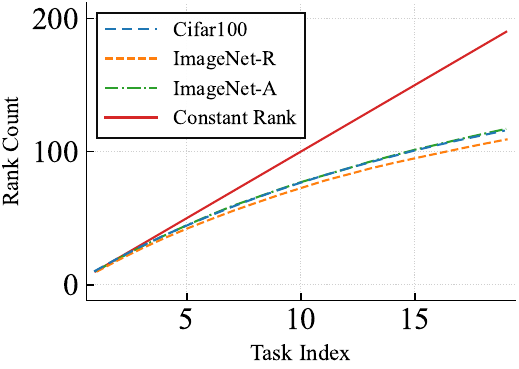}&
  \includegraphics[width=0.30\linewidth]{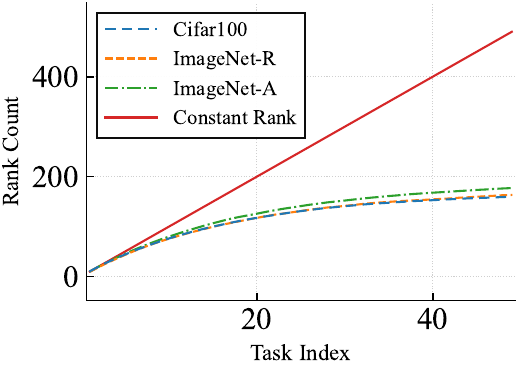}\\
  a) 10 Tasks & b) 20 Tasks& c) 50 Tasks \\
  \end{tabular}}
  \vspace{-2pt}
  \caption{SVD compression across datasets with 10, 20, and 50 tasks. At $\tau=0.99$, the effective rank grows more slowly than the uncompressed $R=rt$ reference and varies with the task sequence.}
  \vspace{-12pt}
  \label{fig:rank}
\end{figure}

\paragraph{Different Effective Rank Definitions.}
\label{app:rank-definitions} We further analyze the impact of different ways of computing the effective rank $R$. As a comparison, we adopt a fixed-rank SVD strategy that retains the top $t \times k$ components, where $t$ is the number of learned tasks and $k$ is a predefined constant. When $k=1$, this setting corresponds to a lower bound, while $k=r$ represents an upper bound (i.e., no compression). As shown in Figure~\ref{fig:a_k}, our method demonstrates strong adaptivity by achieving accuracy curves that are consistently close to the upper bound, while still performing compression according to task difficulty. These results support using adaptive effective rank to balance compression and accuracy.

\begin{figure}
  
  \centering
  \resizebox{0.8\linewidth}{!}{%
  \begin{tabular}{ccc}
    \multicolumn{3}{c}{\includegraphics[width=0.9\linewidth]{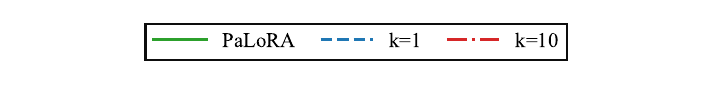}} \\
    \includegraphics[width=0.32\linewidth]{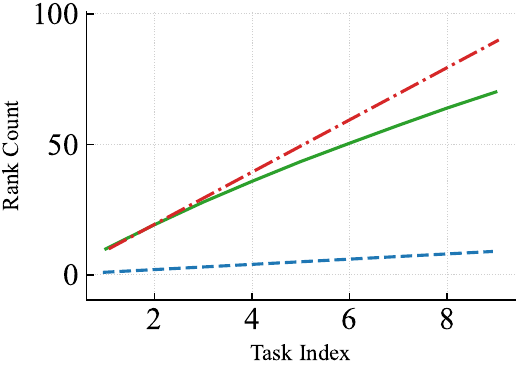} &
    \includegraphics[width=0.32\linewidth]{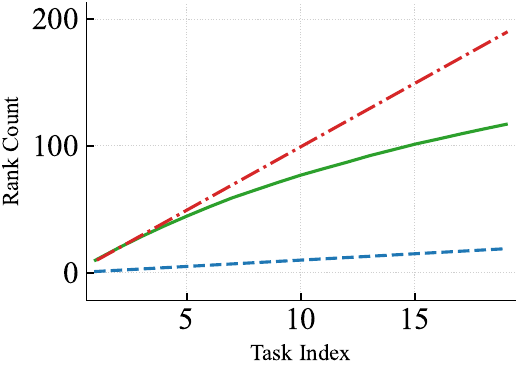} &
    \includegraphics[width=0.32\linewidth]{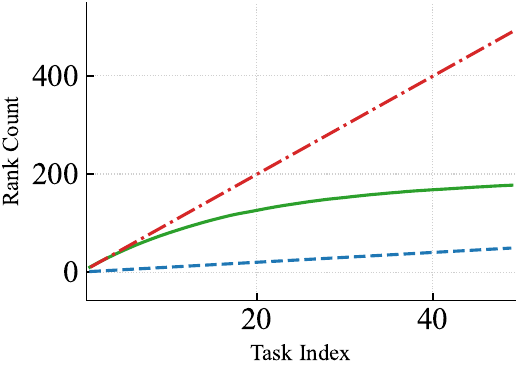} \\
    \includegraphics[width=0.32\linewidth]{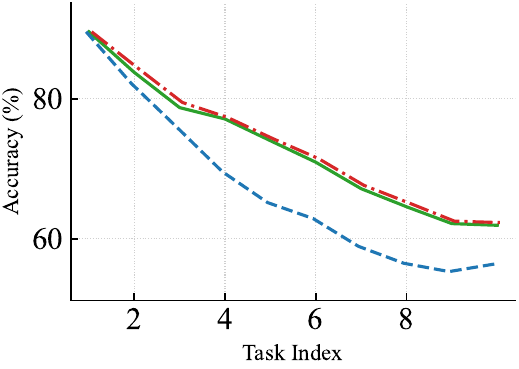} &
    \includegraphics[width=0.32\linewidth]{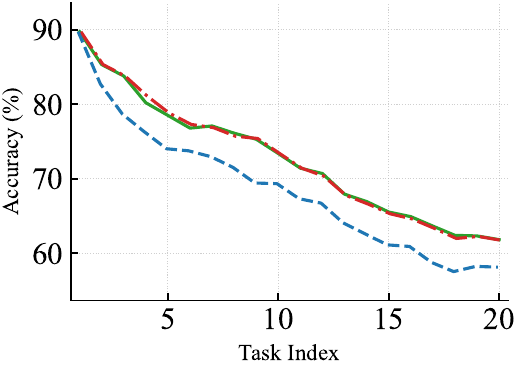} &
    \includegraphics[width=0.32\linewidth]{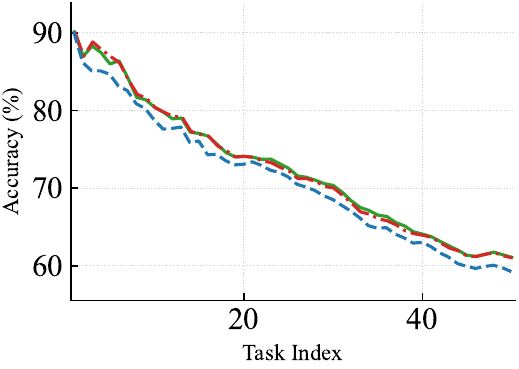} \\
  \end{tabular}}
  \caption{Effective-rank choices on ImageNet-A with 10, 20, and 50 tasks. Top: rank growth; bottom: accuracy. Adaptive SVD retains fewer directions than the $k=r$ reference while maintaining comparable accuracy.}
  \label{fig:a_k}
\end{figure}

\subsection{Hyperparameter Sensitivity}
\label{app:sensitivity}
We vary the pacing constant $C$ and SVD threshold $\tau$ on CIFAR-100 with 50 tasks. With $\tau=0.99$, changing $C$ from 2.25 to 6.25 yields final accuracy between $88.70\%$ and $88.92\%$ (Table~\ref{table:sensitivity-c}). With $C=4$, lowering $\tau$ to 0.95 reduces the average final rank to 39.6 but lowers final accuracy to $84.16\%$. Increasing $\tau$ from 0.99 to 0.999 raises the rank from 159.8 to 390.8 with little change in accuracy (Table~\ref{table:sensitivity-tau}). These results support the default $C=4$ and $\tau=0.99$ as a balance between compression and performance.

\begin{table}[t]
  \centering
  \caption{Pacing-constant sensitivity on CIFAR-100 with 50 tasks and $\tau=0.99$ (accuracy in \%; point estimates). Performance varies little over the tested range.}
  \label{table:sensitivity-c}
  \resizebox{0.45\linewidth}{!}{%
  \begin{tabular}{lrr}
    \toprule
    $C$ & $ACC_{50}$ & $\overline{ACC_{50}}$ \\
    \midrule
    2.25 & 88.92 & 92.54 \\
    4.00 (default) & 88.86 & 92.53 \\
    6.25 & 88.70 & 92.40 \\
    \bottomrule
  \end{tabular}}
\end{table}

\begin{table}[h]
  \centering
  \caption{SVD-threshold sensitivity on CIFAR-100 with 50 tasks and $C=4$ (accuracy in \%; point estimates). The default threshold balances retained rank and accuracy.}
  \label{table:sensitivity-tau}
  \resizebox{0.55\linewidth}{!}{%
  \begin{tabular}{lccc}
    \toprule
    $\tau$ & $ACC_{50}$ & $\overline{ACC_{50}}$ & Avg. Final Rank \\
    \midrule
    0.95 & 84.16 & 90.56 & 39.6 \\
    0.99 (default) & 88.86 & 92.53 & 159.8 \\
    0.999 & 88.74 & 92.55 & 390.8 \\
    \bottomrule
  \end{tabular}}
\end{table}

\subsection{Computational Overhead}
\label{app:overhead}
SVD compression is performed once per task boundary and takes approximately 0.08 seconds per task on an NVIDIA GeForce RTX 4080 SUPER, totaling about 4 seconds over 50 tasks. The triplet loss computes pairwise feature distances with complexity $O(B^2d)$, where $B=128$ is the batch size and $d$ is the feature dimension. Table~\ref{table:runtime} reports average training time per task on CIFAR-100 with 50 tasks. PaLoRA takes 3.01 minutes versus 2.67 for Sequence and 4.36 for LoRA$^{-}$DRS. Without triplet loss, PaLoRA takes 2.81 minutes, quantifying the cost of compression, projection, and pacing together; the triplet loss adds a further 0.20 minutes within PaLoRA.

\begin{table}[t]
  \centering
  \caption{Average training time per task on CIFAR-100 with 50 tasks using an RTX 4080 SUPER. PaLoRA adds 0.34 minutes over Sequence and is faster than LoRA$^{-}$DRS.}
  \label{table:runtime}
  \resizebox{0.55\linewidth}{!}{%
  \begin{tabular}{lc}
    \toprule
    Method & Time per Task (min) \\
    \midrule
    L2P & 2.98 \\
    DualPrompt & 2.95 \\
    CODA-Prompt & 3.02 \\
    BiLoRA & 3.45 \\
    Sequence & 2.67 \\
    Sequence + triplet loss & 2.80 \\
    InfLoRA & 3.31 \\
    LoRA$^{-}$DRS & 4.36 \\
    PaLoRA w/o triplet loss & 2.81 \\
    PaLoRA & 3.01 \\
    \bottomrule
  \end{tabular}}
\end{table}

\subsection{Compatibility with Other LoRA Variants}
\label{app:pissa}
The pacing formulation uses the aggregated low-rank task update rather than a specific factor initialization. We evaluate its compatibility with PiSSA~\cite{DBLP:conf/nips/MengWZ24} on CIFAR-100 with 50 tasks (Table~\ref{table:pissa}). Applying PaLoRA raises the final accuracy of sequential PiSSA adaptation from $71.85\%$ to $89.30\%$, slightly exceeding the $88.86\%$ obtained by PaLoRA with standard LoRA in this comparison.

\begin{table}[h]
  \centering
  \caption{Compatibility with PiSSA on CIFAR-100 with 50 tasks (accuracy in \%; point estimates). PaLoRA improves sequential PiSSA adaptation and slightly exceeds its standard-LoRA counterpart.}
  \label{table:pissa}
  \resizebox{0.5\linewidth}{!}{%
  \begin{tabular}{lrr}
    \toprule
    Method & $ACC_{50}$ & $\overline{ACC_{50}}$ \\
    \midrule
    Sequence & 81.72 & 87.99 \\
    PaLoRA & 88.86 & 92.53 \\
    Sequence + PiSSA & 71.85 & 80.54 \\
    PaLoRA + PiSSA & 89.30 & 92.75 \\
    \bottomrule
  \end{tabular}}
\end{table}

\subsection{Different Pretrained Backbones}
Table~\ref{table:imagenet} compares DINOv1 and iBOT backbones on ImageNet-R and ImageNet-A with 50 tasks. PaLoRA achieves the highest final and average accuracy in all four settings, extending the comparison beyond the default ViT-B/16-IN21K backbone.

\begin{table}[h]
  \caption{Final and average accuracy (\%; point estimates) with DINOv1 and iBOT backbones on 50-task ImageNet-R and ImageNet-A. PaLoRA outperforms the compared methods for both backbones.}
  \label{table:imagenet}
  \centering
  \resizebox{0.85\linewidth}{!}{
  \begin{tabular}{lcccccccc}
    \toprule
    & \multicolumn{4}{c}{ImageNet-R (50 tasks)} & \multicolumn{4}{c}{ImageNet-A (50 tasks)} \\
    \cmidrule(r){2-5}\cmidrule(r){6-9}
    Method & \multicolumn{2}{c}{DINOv1} & \multicolumn{2}{c}{iBOT} & \multicolumn{2}{c}{DINOv1} & \multicolumn{2}{c}{iBOT} \\
    \cmidrule(r){2-3}\cmidrule(r){4-5}\cmidrule(r){6-7}\cmidrule(r){8-9}
    & ACC & $\overline{ACC}$ & ACC & $\overline{ACC}$ & ACC & $\overline{ACC}$ & ACC & $\overline{ACC}$ \\
    \midrule
    InfLoRA~\cite{liang2024inflora} 
    & 61.52 & 70.10 & 58.52 & 69.32 & 30.55 & 42.48 & 33.31 & 49.40 \\
    BiLoRA~\cite{zhu2025bilora} 
    & 62.85 & 66.73 & 65.85 & 70.42 & 28.57 & 37.82 & 34.69 & 44.50 \\
    LoRA$^{-}$DRS~\cite{liu2025lora} 
    & 61.58 & 67.26 & 65.12 & 70.94 & 30.28 & 40.19 & 34.89 & 45.37 \\
    \textbf{PaLoRA (Ours)} 
    & \textbf{66.78} & \textbf{72.80} & \textbf{69.85} & \textbf{75.66} & \textbf{35.94} & \textbf{46.51} & \textbf{43.25} & \textbf{55.18} \\
    \bottomrule
  \end{tabular}}
\end{table}


\newpage
\section*{NeurIPS Paper Checklist}

\begin{enumerate}

\item {\bf Claims}
    \item[] Question: Do the main claims made in the abstract and introduction accurately reflect the paper's contributions and scope?
    \item[] Answer: \answerYes{} 
    \item[] Justification: The abstract and introduction accurately reflect the paper’s contribution and scope.
    \item[] Guidelines:
    \begin{itemize}
        \item The answer \answerNA{} means that the abstract and introduction do not include the claims made in the paper.
        \item The abstract and/or introduction should clearly state the claims made, including the contributions made in the paper and important assumptions and limitations. A \answerNo{} or \answerNA{} answer to this question will not be perceived well by the reviewers. 
        \item The claims made should match theoretical and experimental results, and reflect how much the results can be expected to generalize to other settings. 
        \item It is fine to include aspirational goals as motivation as long as it is clear that these goals are not attained by the paper. 
    \end{itemize}

\item {\bf Limitations}
    \item[] Question: Does the paper discuss the limitations of the work performed by the authors?
    \item[] Answer: \answerYes{} 
    \item[] Justification: We discuss the limitations in the main-text section ``Limitations and Future Work''.
    \item[] Guidelines:
    \begin{itemize}
        \item The answer \answerNA{} means that the paper has no limitation while the answer \answerNo{} means that the paper has limitations, but those are not discussed in the paper. 
        \item The authors are encouraged to create a separate ``Limitations'' section in their paper.
        \item The paper should point out any strong assumptions and how robust the results are to violations of these assumptions (e.g., independence assumptions, noiseless settings, model well-specification, asymptotic approximations only holding locally). The authors should reflect on how these assumptions might be violated in practice and what the implications would be.
        \item The authors should reflect on the scope of the claims made, e.g., if the approach was only tested on a few datasets or with a few runs. In general, empirical results often depend on implicit assumptions, which should be articulated.
        \item The authors should reflect on the factors that influence the performance of the approach. For example, a facial recognition algorithm may perform poorly when image resolution is low or images are taken in low lighting. Or a speech-to-text system might not be used reliably to provide closed captions for online lectures because it fails to handle technical jargon.
        \item The authors should discuss the computational efficiency of the proposed algorithms and how they scale with dataset size.
        \item If applicable, the authors should discuss possible limitations of their approach to address problems of privacy and fairness.
        \item While the authors might fear that complete honesty about limitations might be used by reviewers as grounds for rejection, a worse outcome might be that reviewers discover limitations that aren't acknowledged in the paper. The authors should use their best judgment and recognize that individual actions in favor of transparency play an important role in developing norms that preserve the integrity of the community. Reviewers will be specifically instructed to not penalize honesty concerning limitations.
    \end{itemize}

\item {\bf Theory assumptions and proofs}
    \item[] Question: For each theoretical result, does the paper provide the full set of assumptions and a complete (and correct) proof?
    \item[] Answer: \answerYes{} 
    \item[] Justification: We provide the full set of assumptions and a complete proof in the Appendix.
    \item[] Guidelines:
    \begin{itemize}
        \item The answer \answerNA{} means that the paper does not include theoretical results. 
        \item All the theorems, formulas, and proofs in the paper should be numbered and cross-referenced.
        \item All assumptions should be clearly stated or referenced in the statement of any theorems.
        \item The proofs can either appear in the main paper or the supplemental material, but if they appear in the supplemental material, the authors are encouraged to provide a short proof sketch to provide intuition. 
        \item Inversely, any informal proof provided in the core of the paper should be complemented by formal proofs provided in appendix or supplemental material.
        \item Theorems and Lemmas that the proof relies upon should be properly referenced. 
    \end{itemize}

    \item {\bf Experimental result reproducibility}
    \item[] Question: Does the paper fully disclose all the information needed to reproduce the main experimental results of the paper to the extent that it affects the main claims and/or conclusions of the paper (regardless of whether the code and data are provided or not)?
    \item[] Answer: \answerYes{} 
    \item[] Justification: We provide the code and logs in the supplements and we provide implement details in Appendix.
    \item[] Guidelines:
    \begin{itemize}
        \item The answer \answerNA{} means that the paper does not include experiments.
        \item If the paper includes experiments, a \answerNo{} answer to this question will not be perceived well by the reviewers: Making the paper reproducible is important, regardless of whether the code and data are provided or not.
        \item If the contribution is a dataset and\slash or model, the authors should describe the steps taken to make their results reproducible or verifiable. 
        \item Depending on the contribution, reproducibility can be accomplished in various ways. For example, if the contribution is a novel architecture, describing the architecture fully might suffice, or if the contribution is a specific model and empirical evaluation, it may be necessary to either make it possible for others to replicate the model with the same dataset, or provide access to the model. In general. releasing code and data is often one good way to accomplish this, but reproducibility can also be provided via detailed instructions for how to replicate the results, access to a hosted model (e.g., in the case of a large language model), releasing of a model checkpoint, or other means that are appropriate to the research performed.
        \item While NeurIPS does not require releasing code, the conference does require all submissions to provide some reasonable avenue for reproducibility, which may depend on the nature of the contribution. For example
        \begin{enumerate}
            \item If the contribution is primarily a new algorithm, the paper should make it clear how to reproduce that algorithm.
            \item If the contribution is primarily a new model architecture, the paper should describe the architecture clearly and fully.
            \item If the contribution is a new model (e.g., a large language model), then there should either be a way to access this model for reproducing the results or a way to reproduce the model (e.g., with an open-source dataset or instructions for how to construct the dataset).
            \item We recognize that reproducibility may be tricky in some cases, in which case authors are welcome to describe the particular way they provide for reproducibility. In the case of closed-source models, it may be that access to the model is limited in some way (e.g., to registered users), but it should be possible for other researchers to have some path to reproducing or verifying the results.
        \end{enumerate}
    \end{itemize}

\item {\bf Open access to data and code}
    \item[] Question: Does the paper provide open access to the data and code, with sufficient instructions to faithfully reproduce the main experimental results, as described in supplemental material?
    \item[] Answer: \answerYes{} 
    \item[] Justification: We provide the code and logs in the supplements.
    \item[] Guidelines:
    \begin{itemize}
        \item The answer \answerNA{} means that paper does not include experiments requiring code.
        \item Please see the NeurIPS code and data submission guidelines (\url{https://neurips.cc/public/guides/CodeSubmissionPolicy}) for more details.
        \item While we encourage the release of code and data, we understand that this might not be possible, so \answerNo{} is an acceptable answer. Papers cannot be rejected simply for not including code, unless this is central to the contribution (e.g., for a new open-source benchmark).
        \item The instructions should contain the exact command and environment needed to run to reproduce the results. See the NeurIPS code and data submission guidelines (\url{https://neurips.cc/public/guides/CodeSubmissionPolicy}) for more details.
        \item The authors should provide instructions on data access and preparation, including how to access the raw data, preprocessed data, intermediate data, and generated data, etc.
        \item The authors should provide scripts to reproduce all experimental results for the new proposed method and baselines. If only a subset of experiments are reproducible, they should state which ones are omitted from the script and why.
        \item At submission time, to preserve anonymity, the authors should release anonymized versions (if applicable).
        \item Providing as much information as possible in supplemental material (appended to the paper) is recommended, but including URLs to data and code is permitted.
    \end{itemize}

\item {\bf Experimental setting/details}
    \item[] Question: Does the paper specify all the training and test details (e.g., data splits, hyperparameters, how they were chosen, type of optimizer) necessary to understand the results?
    \item[] Answer: \answerYes{} 
    \item[] Justification: We provide implement details in Appendix.
    \item[] Guidelines:
    \begin{itemize}
        \item The answer \answerNA{} means that the paper does not include experiments.
        \item The experimental setting should be presented in the core of the paper to a level of detail that is necessary to appreciate the results and make sense of them.
        \item The full details can be provided either with the code, in appendix, or as supplemental material.
    \end{itemize}

\item {\bf Experiment statistical significance}
    \item[] Question: Does the paper report error bars suitably and correctly defined or other appropriate information about the statistical significance of the experiments?
    \item[] Answer: \answerYes{} 
    \item[] Justification: The main accuracy tables report means and standard deviations over five random seeds (0--4). Other tables and the existing accuracy curves report point estimates, as stated in the experimental setup.
    \item[] Guidelines:
    \begin{itemize}
        \item The answer \answerNA{} means that the paper does not include experiments.
        \item The authors should answer \answerYes{} if the results are accompanied by error bars, confidence intervals, or statistical significance tests, at least for the experiments that support the main claims of the paper.
        \item The factors of variability that the error bars are capturing should be clearly stated (for example, train/test split, initialization, random drawing of some parameter, or overall run with given experimental conditions).
        \item The method for calculating the error bars should be explained (closed form formula, call to a library function, bootstrap, etc.)
        \item The assumptions made should be given (e.g., Normally distributed errors).
        \item It should be clear whether the error bar is the standard deviation or the standard error of the mean.
        \item It is OK to report 1-sigma error bars, but one should state it. The authors should preferably report a 2-sigma error bar than state that they have a 96\% CI, if the hypothesis of Normality of errors is not verified.
        \item For asymmetric distributions, the authors should be careful not to show in tables or figures symmetric error bars that would yield results that are out of range (e.g., negative error rates).
        \item If error bars are reported in tables or plots, the authors should explain in the text how they were calculated and reference the corresponding figures or tables in the text.
    \end{itemize}

\item {\bf Experiments compute resources}
    \item[] Question: For each experiment, does the paper provide sufficient information on the computer resources (type of compute workers, memory, time of execution) needed to reproduce the experiments?
    \item[] Answer: \answerYes{} 
    \item[] Justification: We provide implement details (include compute resources) in Appendix.
    \item[] Guidelines:
    \begin{itemize}
        \item The answer \answerNA{} means that the paper does not include experiments.
        \item The paper should indicate the type of compute workers CPU or GPU, internal cluster, or cloud provider, including relevant memory and storage.
        \item The paper should provide the amount of compute required for each of the individual experimental runs as well as estimate the total compute. 
        \item The paper should disclose whether the full research project required more compute than the experiments reported in the paper (e.g., preliminary or failed experiments that didn't make it into the paper). 
    \end{itemize}
    
\item {\bf Code of ethics}
    \item[] Question: Does the research conducted in the paper conform, in every respect, with the NeurIPS Code of Ethics \url{https://neurips.cc/public/EthicsGuidelines}?
    \item[] Answer: \answerYes{} 
    \item[] Justification: The research conducted in our paper conform, in every respect, with the NeurIPS Code of Ethics.
    \item[] Guidelines:
    \begin{itemize}
        \item The answer \answerNA{} means that the authors have not reviewed the NeurIPS Code of Ethics.
        \item If the authors answer \answerNo, they should explain the special circumstances that require a deviation from the Code of Ethics.
        \item The authors should make sure to preserve anonymity (e.g., if there is a special consideration due to laws or regulations in their jurisdiction).
    \end{itemize}

\item {\bf Broader impacts}
    \item[] Question: Does the paper discuss both potential positive societal impacts and negative societal impacts of the work performed?
    \item[] Answer: \answerNA{} 
    \item[] Justification: The paper has no societal impact.
    \item[] Guidelines:
    \begin{itemize}
        \item The answer \answerNA{} means that there is no societal impact of the work performed.
        \item If the authors answer \answerNA{} or \answerNo, they should explain why their work has no societal impact or why the paper does not address societal impact.
        \item Examples of negative societal impacts include potential malicious or unintended uses (e.g., disinformation, generating fake profiles, surveillance), fairness considerations (e.g., deployment of technologies that could make decisions that unfairly impact specific groups), privacy considerations, and security considerations.
        \item The conference expects that many papers will be foundational research and not tied to particular applications, let alone deployments. However, if there is a direct path to any negative applications, the authors should point it out. For example, it is legitimate to point out that an improvement in the quality of generative models could be used to generate Deepfakes for disinformation. On the other hand, it is not needed to point out that a generic algorithm for optimizing neural networks could enable people to train models that generate Deepfakes faster.
        \item The authors should consider possible harms that could arise when the technology is being used as intended and functioning correctly, harms that could arise when the technology is being used as intended but gives incorrect results, and harms following from (intentional or unintentional) misuse of the technology.
        \item If there are negative societal impacts, the authors could also discuss possible mitigation strategies (e.g., gated release of models, providing defenses in addition to attacks, mechanisms for monitoring misuse, mechanisms to monitor how a system learns from feedback over time, improving the efficiency and accessibility of ML).
    \end{itemize}
    
\item {\bf Safeguards}
    \item[] Question: Does the paper describe safeguards that have been put in place for responsible release of data or models that have a high risk for misuse (e.g., pre-trained language models, image generators, or scraped datasets)?
    \item[] Answer: \answerNA{} 
    \item[] Justification: The paper poses no such risks.
    \item[] Guidelines:
    \begin{itemize}
        \item The answer \answerNA{} means that the paper poses no such risks.
        \item Released models that have a high risk for misuse or dual-use should be released with necessary safeguards to allow for controlled use of the model, for example by requiring that users adhere to usage guidelines or restrictions to access the model or implementing safety filters. 
        \item Datasets that have been scraped from the Internet could pose safety risks. The authors should describe how they avoided releasing unsafe images.
        \item We recognize that providing effective safeguards is challenging, and many papers do not require this, but we encourage authors to take this into account and make a best faith effort.
    \end{itemize}

\item {\bf Licenses for existing assets}
    \item[] Question: Are the creators or original owners of assets (e.g., code, data, models), used in the paper, properly credited and are the license and terms of use explicitly mentioned and properly respected?
    \item[] Answer: \answerYes{} 
    \item[] Justification: The creators or original owners of assets (e.g., code, data, models) used in the paper are properly credited.
    \item[] Guidelines:
    \begin{itemize}
        \item The answer \answerNA{} means that the paper does not use existing assets.
        \item The authors should cite the original paper that produced the code package or dataset.
        \item The authors should state which version of the asset is used and, if possible, include a URL.
        \item The name of the license (e.g., CC-BY 4.0) should be included for each asset.
        \item For scraped data from a particular source (e.g., website), the copyright and terms of service of that source should be provided.
        \item If assets are released, the license, copyright information, and terms of use in the package should be provided. For popular datasets, \url{paperswithcode.com/datasets} has curated licenses for some datasets. Their licensing guide can help determine the license of a dataset.
        \item For existing datasets that are re-packaged, both the original license and the license of the derived asset (if it has changed) should be provided.
        \item If this information is not available online, the authors are encouraged to reach out to the asset's creators.
    \end{itemize}

\item {\bf New assets}
    \item[] Question: Are new assets introduced in the paper well documented and is the documentation provided alongside the assets?
    \item[] Answer: \answerNA{} 
    \item[] Justification: The paper does not release new assets.
    \item[] Guidelines:
    \begin{itemize}
        \item The answer \answerNA{} means that the paper does not release new assets.
        \item Researchers should communicate the details of the dataset\slash code\slash model as part of their submissions via structured templates. This includes details about training, license, limitations, etc. 
        \item The paper should discuss whether and how consent was obtained from people whose asset is used.
        \item At submission time, remember to anonymize your assets (if applicable). You can either create an anonymized URL or include an anonymized zip file.
    \end{itemize}

\item {\bf Crowdsourcing and research with human subjects}
    \item[] Question: For crowdsourcing experiments and research with human subjects, does the paper include the full text of instructions given to participants and screenshots, if applicable, as well as details about compensation (if any)? 
    \item[] Answer: \answerNA{} 
    \item[] Justification: The paper does not involve crowdsourcing nor research with human subjects.
    \item[] Guidelines:
    \begin{itemize}
        \item The answer \answerNA{} means that the paper does not involve crowdsourcing nor research with human subjects.
        \item Including this information in the supplemental material is fine, but if the main contribution of the paper involves human subjects, then as much detail as possible should be included in the main paper. 
        \item According to the NeurIPS Code of Ethics, workers involved in data collection, curation, or other labor should be paid at least the minimum wage in the country of the data collector. 
    \end{itemize}

\item {\bf Institutional review board (IRB) approvals or equivalent for research with human subjects}
    \item[] Question: Does the paper describe potential risks incurred by study participants, whether such risks were disclosed to the subjects, and whether Institutional Review Board (IRB) approvals (or an equivalent approval/review based on the requirements of your country or institution) were obtained?
    \item[] Answer: \answerNA{} 
    \item[] Justification: The paper does not involve crowdsourcing nor research with human subjects.
    \item[] Guidelines:
    \begin{itemize}
        \item The answer \answerNA{} means that the paper does not involve crowdsourcing nor research with human subjects.
        \item Depending on the country in which research is conducted, IRB approval (or equivalent) may be required for any human subjects research. If you obtained IRB approval, you should clearly state this in the paper. 
        \item We recognize that the procedures for this may vary significantly between institutions and locations, and we expect authors to adhere to the NeurIPS Code of Ethics and the guidelines for their institution. 
        \item For initial submissions, do not include any information that would break anonymity (if applicable), such as the institution conducting the review.
    \end{itemize}

\item {\bf Declaration of LLM usage}
    \item[] Question: Does the paper describe the usage of LLMs if it is an important, original, or non-standard component of the core methods in this research? Note that if the LLM is used only for writing, editing, or formatting purposes and does \emph{not} impact the core methodology, scientific rigor, or originality of the research, declaration is not required.
    \item[] Answer: \answerNA{} 
    \item[] Justification: The core method development in this research does not involve LLMs as any important, original, or non-standard components.
    \item[] Guidelines:
    \begin{itemize}
        \item The answer \answerNA{} means that the core method development in this research does not involve LLMs as any important, original, or non-standard components.
        \item Please refer to our LLM policy in the NeurIPS handbook for what should or should not be described.
    \end{itemize}

\end{enumerate}

\end{document}